\title{\bfseries Supervised Fuzzy Partitioning}
\author[a]{\normalsize Pooya Ashtari \thanks{Email: \url{p.ashtari@alum.sharif.edu}}}
\author[b]{\normalsize Fateme Nateghi Haredasht \thanks{Email: \url{f.nateghi@aut.ac.ir}}}
\author[c]{\normalsize Hamid Beigy \thanks{Email: \url{beigy@sharif.edu}}}
\affil[a]{\footnotesize Department of Electrical Engineering, Sharif University of Technology, Tehran, Iran}
\affil[b]{\footnotesize Department of Biomedical Engineering, Amirkabir University of Technology, Tehran, Iran}
\affil[c]{\footnotesize Department of Computer Engineering, Sharif University of Technology, Tehran, Iran}
\begin{document}

\maketitle
	
\begin{abstract}
	Centroid-based methods including k-means and fuzzy c-means are known as effective and easy-to-implement approaches to clustering purposes in many applications. However, these algorithms cannot be directly applied to supervised tasks. This paper thus presents a generative model extending the centroid-based clustering approach to be applicable to classification and regression tasks. Given an arbitrary loss function, the proposed approach, termed Supervised Fuzzy Partitioning (SFP), incorporates labels information into its objective function through a surrogate term penalizing the empirical risk. Entropy-based regularization is also employed to fuzzify the partition and to weight features, enabling the method to capture more complex patterns, identify significant features, and yield better performance facing high-dimensional data. An iterative algorithm based on block coordinate descent scheme is formulated to efficiently find a local optimum. Extensive classification experiments on synthetic, real-world, and high-dimensional datasets demonstrate that the predictive performance of SFP is competitive with state-of-the-art algorithms such as \replacedmy{PA}{SVM and random forest}{random forest and SVM}. \replacedmy{PA}{SFP}{The SFP} has a major advantage over such methods, in that it not only leads to a flexible, nonlinear model but also can exploit any convex loss function in the training phase without compromising computational efficiency.    
\end{abstract}

\begin{keywords}
	Supervised k-means, Centroid-based Clustering, Entropy-based Regularization, Feature Weighting, Mixtures of Experts.
\end{keywords}

\section{Introduction} \label{sec: introduction}
The k-means \citep{macqueen1967some} and its variations are well-known algorithms that have been widely used in many unsupervised applications such as clustering and representation learning due to their \replaced{P3.5}{simplicity and efficiency}{simplicity, efficiency, and ability to handle large datasets}. \replaced{P2.2}{There exist many variants of k-means in the literature. Fuzzy c-means (FCM) \citep{Bezdek1981} modifies k-means by replacing the hard assignment constraint with a probabilistic assignment so that the resulting partition becomes fuzzy, i.e., data points can belong to more than one cluster. Alternatively, maximum-entropy clustering \citep{rose1990statistical} and fuzzy entropy clustering \cite{tran2000fuzzy} take entropy-based approaches in order to formulate a probabilistic assignment.}{There exist many algorithms, e.g., fuzzy c-means (FCM) \citep{Bezdek1981}, maximum-entropy clustering \citep{rose1990statistical}, that extend k-means such that the resulting partitions become fuzzy, i.e., data points can belong to more than one cluster.} Some other extended versions of k-means employ a feature weighting approach, considering weights for features and estimating them, allowing determination of significant features \deletedmy{EiC3}{friedman2004clustering, huang2005automated, jing2007entropy, witten2010framework}. This approach is often followed in the context of \replacedmy{PA}{(soft)}{soft} subspace clustering, which can also be useful when it comes to high-dimensional data \replacedmy{PA}{which has}{that have} most of its information in a subset of features rather than all features. \added{P2.2, EiC3}{In this regard, algorithms proposed in \citep{chan2004optimization} and \citep{huang2005automated} utilize a \textit{fuzzy weighting} technique for feature weights in the same way as FCM does for memberships. Alternatively, EWKM \citep{jing2007entropy} and COSA \citep{friedman2004clustering} as another prominent algorithms adopt an \textit{entropy weighting} technique, including entropy regularization in the objective function to regulate the dispersion of feature weights. \citet{witten2010framework} proposed a new framework, called sparse k-means, by employing a lasso-type penalty in k-means to achieve a sparse solution for feature weights and to select important features. \citet{huang2018new} modified k-means by adding an $ \ell^2 $-norm regularization of feature weights to its objective function. Recently, algorithms \cite{zhou2016fuzzy, yang2018feature} have been developed in order to fuzzify memberships and simultaneously maximize the entropy of feature weights. In addition to these considerations, \citet{deng2010enhanced} developed an algorithm which integrates between-cluster variation with within-cluster compactness to enhance the clustering performance.} 

On the other hand, few efforts have been made in order to generalize k-means to be suitable for supervised learning problems. The majority of relevant methods aimed at modifying k-means to the case of semi-supervised clustering, where the number of labeled data points is relatively small, causing them to become computationally intractable or their performance to get worse when it comes to fully-supervised tasks. Moreover, such methods typically employ Euclidean distance, which becomes less informative as the number of features grows, making them inefficient in high-dimensional scenarios. 

In this paper, we propose a supervised, generative algorithm derived from k-means, called supervised fuzzy partitioning (SFP), that benefits from labels and the loss function by incorporating them into the objective function through a penalty term being a surrogate for the empirical risk. We also employ entropy-based regularizers both to achieve a fuzzy partition and to learn weights of features, yielding a flexible nonlinear model capable of selecting significant features and performing more effectively when dealing with high-dimensional data. Classification experiments on synthetic and real-world datasets are conducted to verify the superiority of SFP over effective methods such as \replacedmy{PA}{SVM and random forest}{random forest and SVM}, both in terms of predictive performance and computational complexity. 

The rest of this paper is organized as follows: Section \ref{sec: background} briefly reviews related techniques in the incorporation of supervision into centroid-based methods. Section \ref{sec: SFP method} presents the SFP algorithm. Experiments are presented in Section \ref{sec: experiments}. We conclude this paper in Section \ref{sec: conclusions}.

\section{Related Work} \label{sec: background}
There have been many methods successfully modifying k-means or FCM to make use of side information when a small number of data points are labeled. Such methods have been widely studied in semi-supervised clustering (SSC) literature. \added{P1.3}{\citet{pedrycz1985algorithms} introduced Semi-Supervised FCM (SSFCM) that modifies the FCM objective function by incorporating supervision through a prior membership matrix. SMUC \citep{yin2012semi} uses prior membership degrees but in a different way, including them through entropy regularization. Based on SSFCM, an algorithm is proposed in \citep{gan2018local} by taking local homogeneity of neighbors into consideration. However, such methods, which are based on prior memberships, require the number of prototypes to be set equal to the number of classes. This make the algorithm fail to discover subgroups in each class and mostly leads to classifiers incapable of coping with non-linearly separable data. The methods proposed by \citet{pedrycz1998conditional} and \citet{staiano2006improving} handle this issue by introducing FCM-based functions in order to adjust the prototypes according to both the features and the labels. Recently, \citet{casalino2018incremental} presented DISSFCM which extends SSFCM to achieve a dynamic, incremental version in which the number of prototypes dynamically grows to follow the evolving structure of streaming data.} One of the most popular frameworks to incorporate supervision is the constraint-based approach, in which constraints \replacedmy{PA}{resulting from}{resulted from} existing labels are involved in a clustering algorithm to improve the data partitioning. This is done, for example by enforcing constraints during clustering \citep{wagstaff2001constrained}, initializing and constraining clustering according to labeled data points \citep{basu2002semi}, or imposing penalties for violation of constraints \replacedmy{EiC3}{\citep{basu2004active, basu2004probabilistic}}{basu2004active, basu2004probabilistic, bilenko2004integrating}. These algorithms typically use pairwise \textit{must-link} and \textit{cannot-link} constraints between points in a dataset according to the provided labels. Given a set of data points $ \{\mathbf{x}_1, \dots,\mathbf{x}_n\} \subseteq \mathbb{R}^p $, let $ \mathcal{M} $ be a set of must-link pairs such that $ (\mathbf{x}_i, \mathbf{x}_{i'}) \in \mathcal{M} $ implies $ \mathbf{x}_i $ and $ \mathbf{x}_{i'} $ should be in the same cluster, and $ \mathcal{C} $ be a set of cannot-link pairs such that $ (\mathbf{x}_i, \mathbf{x}_{i'}) \in \mathcal{C} $ implies $ \mathbf{x}_i $ and $ \mathbf{x}_{i'} $ should not be in the same cluster. PCKMeans \citep{basu2004active} as a representative algorithm of constraint-based framework seeks to minimize the following objective function:  
\begin{equation}
	\label{eq: PCKMeans}
	\mathcal{J}_{\text{PCKMeans}} \ = \ \sum_{i=1}^{n} {\lVert \mathbf{x}_i - \mathbf{v}_{c_i} \rVert}^2 + \sum_{(\mathbf{x}_i, \mathbf{x}_{i'}) \in \mathcal{M}} \alpha_{ii'} \mathbbm{1}(c_i \neq c_{i'}) + \sum_{(\mathbf{x}_i, \mathbf{x}_{i'}) \in \mathcal{C}} \overline{\alpha}_{ii'} \mathbbm{1}(c_i = c_{i'})
\end{equation}
where $ c_i $ is the cluster assignment of a data point $ \mathbf{x}_i $, and matrices $ A = \{\alpha_{ii'}\} $ and $ \overline{A} = \{\overline{\alpha}_{ii'}\} $ denote the costs of violating the constraints in $ \mathcal{M} $ and $ \mathcal{C} $, respectively. In general, $ A $ and $ \overline{A} $ are not available, so for simplicity, all their elements are assumed to have the same constant value $ \alpha $, which causes all constraint violations to be treated equally. However, the penalty for violating a must-link constraint between distant points should be higher than that between nearby points. Another major approach, which does not have this limitation, is based on metric learning, where the metric employed by an existing clustering algorithm is adapted such that the available constraints become satisfied. This can be done through a two-step process, i.e., first, a Mahalanobis distance can be learned from pairwise constraints \replacedmy{EiC3}{\citep{xing2003distance}}{xing2003distance, davis2007information, weinberger2009distance, mignon2012pcca}, and then a modified k-means with that metric is formulated. \citet{bilenko2004integrating} combined both of these techniques into a single model, i.e., they used pairwise constraints along with unlabeled data for simultaneously constraining the clustering and learning distance metrics in a k-means-like formulation. \deletedmy{EiC3}{Other works took a kernel approach to SSC [kulis2009semi, yin2010semi, abin2015active, liu2017semi] to handle nonlinear cluster boundaries.}

One major drawback of both the constrained clustering and the metric learning approaches is that they typically rely on pairwise information, which in turn, requires intensive computations and storage when the number of labeled data points grows, making them impractical for large datasets. One method to reduce computations is an active learning approach, where the number of pair-wise constraints required to obtain a good clustering is minimized by querying the most informative pairs first \replacedmy{EiC3}{\citep{basu2004active, abin2014active}}{basu2004active, abin2014active, xiong2017active, chang2018automatic}. In this way, although fewer constraints will be needed to achieve a specific performance, taking full advantage of supervision information still requires intensive computations. Moreover, they typically do not take into consideration the loss function by which the predictions are to be evaluated; thus, their performance can \replacedmy{PA}{worsen}{become worse} in situations that the predicted labels of test data are evaluated by a loss function different from that has been used in the training phase. \deletedmy{P3.6}{Overall, these methods are aimed at semi-supervised tasks, where the number of labeled data points is relatively small, rather than supervised tasks.}

\added{P3.3}{There has also been some work that tries to take full advantage of supervision under the heading of supervised or predictive clustering. This approach seeks clusters of observations that are similar to each other (and dissimilar to observations in other clusters) and takes labels into account at the same time. \citet{eick2004supervised} proposed a framework by introducing a fitness function to retain homogeneity of data points in a cluster and to simultaneously keep the number of clusters low. Based on a rule learning approach, a method was proposed in \cite{vzenko2005learning} by describing each cluster with a rule and then learning these rules based on a heuristic sequential covering algorithm. Such approaches are able to discover subgroups of a class and thus discriminate between classes made of several subgroups possibly located far from each other.}

On the other hand, few efforts have been made in order to generalize k-means to be suitable for supervised problems. \citet{al2006adapting} devised a simulated annealing scheme to find the optimal \added{P3.7}{feature} weights based on labels in a modified k-means employing weighted Euclidean metric. In contrast to many soft subspace clustering algorithms with feature weighting approach, there is not an analytical solution for weights at each iteration, and the algorithm updates weights by simulated annealing, requiring to run k-means repeatedly, which can be computationally intractable for large datasets. We propose a flexible supervised version of the centroid-based methods, with an entropy-based approach to both membership fuzzification and feature weighting, aiming at being fully scalable and effective in various settings.

\section{Supervised Fuzzy Partitioning (SFP)} \label{sec: SFP method}
In this section, we first introduce a new objective function taking labels information into account in addition to within-cluster distances, building an optimization problem suitable for supervised learning tasks in Section \ref{sec: the objective function of SFP}. We then adopt a block coordinate descent (BCD) method to find a local minimum for this problem in Section \ref{sec: BCD}. The relation of the proposed learner with RBF networks and with mixtures of experts are discussed in Sections \ref{sec: SFP and RBF} and \ref{sec: SFP and MoE}, respectively.

\subsection{The Objective Function of SFP} \label{sec: the objective function of SFP}
\replaced{P3.8}{A simple way to construct a classifier from a clustering algorithm is to partition a dataset into several clusters (typically more than the number of classes) using the clusterer and then to assign a \textit{label prototype} to each cluster using majority vote among labels of data points in each cluster. For an unseen data point, after it is classified into a cluster, the corresponding label prototype is used as a prediction. However, such an approach leads to poor results, since despite using labels in the voting step, it does not involve labels information in the first step, which is clustering. In fact, instances are clustered such that clusters become homogeneous in terms of features but not labels. It would be more effective to integrate these two steps together (e.g., decision tree learning) and be able to make a trade-off between the homogeneity of features and the homogeneity of labels in a cluster.}{Even though centroid-based algorithms are capable of representing data points structure properly, if they are directly applied to supervised tasks, poor results will be obtained since they are intended for unsupervised settings and do not involve labels information in learning procedure.} \replacedmy{PA}{To this end, in the context of k-means-type clustering, we}{We} incorporate labels alongside the within-cluster variability of data points into the objective function, achieving an algorithm applicable to supervised settings.

Every \textit{fuzzy partition} of $ n $ data points into $ k $ clusters can be represented by a \textit{membership matrix} $ \mathbf{U} = [u_{ij}]_{n \times k} $ as follows:
\begin{align}
	\label{eq: membership matrix} 
	& \sum_{j=1}^{k} u_{ij} = 1, \quad i=1, \dots, n, \nonumber  \\
	& u_{ij} \geq 0,  \quad i=1, \dots, n, \quad j=1, \dots, k. 
\end{align}
where $ u_{ij} $ expresses the membership degree of point $ \mathbf{x}_i $ in the $ j $th cluster. By limiting the constraints $ u_{ij} \geq 0 $ to $ u_{ij} \in \{0,1\} $, the membership matrix will represent a \textit{crisp partition}. In this case, $ u_{ij} $ simply indicates whether the data point $ \mathbf{x}_i $ belongs to the $ j $th cluster or not. \addedmy{PA}{Each cluster is defined by a column of $ \mathbf{U} $ and can be considered as a (fuzzy) set of observations which forms a part of a (fuzzy) partition of the dataset.} There are many measures of fuzziness reflecting how much uniform the distribution of membership degrees are. The entropy of memberships can be considered such a measure, which for a membership vector $ \mathbf{u} = (u_1, \dots, u_k) $ is defined as follows:
\begin{equation}
	\label{eq: entropy}
	\replacedmy{PA}{H(\mathbf{u}) \triangleq - \sum_{j=1}^{k} u_j \ln u_j,}{H(\mathbf{u}) = - \sum_{j=1}^{k} u_j \ln u_j.}
\end{equation}
\added{P3.11}{with the convention for $ u_j = 0 $ that $ 0 \times \ln(0) \triangleq 0 $.} In this paper, we use the entropy of memberships in the proposed objective function to regulate the fuzziness of memberships and \replaced{P3.7}{feature weights}{weight} parameters.

Given a training set $ \mathcal{D} = \{ (\mathbf{x}_i, y_i) \}_{i=1}^{n} $---where $ \mathbf{x}_i \in \mathcal{X} \subseteq \mathbb{R}^p $ is an input vector, and $ \mathbf{y}_i \in \mathcal{Y} \subseteq \mathbb{R} $ is its label---and a convex loss function $ \ell: \mathcal{Y} \times \mathcal{Z} \mapsto \mathbb{R}_+ $, we add a regularization term to a k-means-like objective function, imposing a penalty on the empirical risk $ R(\mathbf{Z}) = \frac{1}{n} \sum_{i=1}^{n} \ell(y_i, \sum_{j=1}^{k} u_{ij} \mathbf{z}_j) $, where \replaced{P3.9}{$ \mathbf{Z} = \{\mathbf{z}_j\}_{j=1}^{k} $ and we define $ \mathbf{z}_j $ as}{$ \mathbf{z}_j $ is}  the \textit{label prototype} of the $ j $th cluster that we wish to estimate. By Equation \eqref{eq: membership matrix} and convexity of the loss function, it can be easily obtained that
\begin{equation}
	\label{eq: risk-surrogate}
	\sum_{i=1}^{n} \ell(y_i, \sum_{j=1}^{k} u_{ij} \mathbf{z}_j) \leq \sum_{i=1}^{n}\sum_{j=1}^{k}  u_{ij} \ell(y_i,\mathbf{z}_j).
\end{equation}
Note that in the case of crisp partition, the equality holds. Now, we found an upper bound for the risk, which can be used as a \textit{surrogate term} to penalizing the empirical risk. It is expected that overall the lower the surrogate, the lower the risk (note that this does not necessarily hold always), validating the use of the surrogate \replacedmy{PA}{to regulate}{in regulating} the risk and consequently the amount of labels contribution. The surrogate is particularly useful in that it is a linear combination of $ u_{ij} $s, which in turn causes the step of memberships updating in Lloyd's algorithm to be feasible and straightforward.

Since typical Euclidean distance becomes less informative as the number of features grows \deletedmy{EiC3}{beyer1999nearest,kriegel2009clustering}, we employ weighted Euclidean distance in the within-cluster term alongside a penalty on the negative entropy of \added{P3.7}{feature} weights in the way similar to EWKM \citep{jing2007entropy}. This regularization leads to better performance and prevents overfitting especially in high-dimensional settings. Throughout this paper, we denote the weighted Euclidean distance between data points $ \mathbf{x}_i $ and $ \mathbf{x}_{i'} $  by $ {\lVert \mathbf{x}_i - \mathbf{x}_{i'} \rVert}_\mathbf{w} = \big( \sum_{l=1}^{p} w_l (\mathit{x}_{il} - \mathit{x}_{i' l})^2 \big)^{1/2} $, where $ \mathbf{w} $ is \replaced{P3.7}{the vector of feature weights}{the weight vector}. Similarly, we also extend conventional k-means to a fuzzy version by adding entropy-based regularization to the objective function, achieving a more flexible model. As a result, we propose a supervised, weighted, and fuzzy version of k-means, called \textit{supervised fuzzy partitioning (SFP)}, which aims to solve the following problem: 
\begin{alignat}{4}
	\label{eq: SFP}
	& \underset{\mathbf{U},\mathbf{V},\mathbf{W},\mathbf{Z}}{\text{minimize}} & \quad &  \sum_{j=1}^{k}\sum_{i=1}^{n} u_{ij} {\lVert \mathbf{x}_i - \mathbf{v}_j \rVert}_{\mathbf{w}_j}^2 + \alpha \sum_{j=1}^{k}\sum_{i=1}^{n} u_{ij} \ell(y_{i},\mathbf{z}_j) \nonumber \\
	&          		    & \quad & +\gamma\sum_{j=1}^{k}\sum_{i=1}^{n} u_{ij} \ln u_{ij} +  \lambda \sum_{j=1}^{k}\sum_{l=1}^{p} w_{jl} \ln w_{jl}  \nonumber \\ 
	& \text{subject to} & \quad & \sum_{j=1}^{k} u_{ij} = 1, \quad i=1, \dots, n,  \nonumber \\
	&          		    & \quad & u_{ij} \geq 0, \quad i=1, \dots, n, \quad j=1, \dots, k,  \nonumber \\ 
	&          		    & \quad & \sum_{l=1}^{p} w_{jl} = 1, \quad j=1, \dots, k, \nonumber \\
	&          		    & \quad & w_{jl} \geq 0, \quad j=1, \dots, k, \quad l=1, \dots, p.
\end{alignat}
where the first and second term represent the within-cluster variability of data points and labels, respectively. $ \alpha \geq 0 $ is a hyperparameter that controls the strength of labels contribution. The greater $ \alpha $ is, the more reliant the resulting partition will be on labels information. The third term is the negative entropy of memberships, and $ \gamma > 0 $ is the regularization parameter. A large $ \gamma $ results in uniform membership values $ u_{ij} $, hence a \replaced{P3.13}{less crisp}{fuzzier} partition. A closer-to-zero $ \gamma $ makes $ u_{ij} $ converge to $ 0 $ or $ 1 $, leading to a more crisp partition. The last term is also the negative entropy of feature weights, and $ \lambda > 0 $ controls the distribution of weights. A large $ \lambda $ results in uniform weights, while a small one makes some weights close to $ 0 $, which is useful to figure out significant features. The hyperparameters $ \alpha, \gamma, \lambda $, and $ k $ are often tuned by cross-validation (it is further discussed in Section \ref{sec: experiments}). Similar to k-means, it is computationally intractable to find a global minimizer for the problem given in \eqref{eq: SFP}; however, we introduce an efficient algorithm in Section \ref{sec: BCD} that converges quickly to a local minimum. 

\added{P1.2, P3.24}{Utilizing a fuzzy partition helps to model complex patterns more effectively and provides a more flexible framework. In particular, the use of fuzziness is helpful in coping with situations where separations between clusters are not clear because of some data points possibly belonging partially to multiple clusters. Even in cases where no clusters overlap, the SFP algorithm is able to be adjusted to roughly produce a crisp partition by setting $ \gamma $ close to zero. Feature weighting, on the other hand, makes SFP measure the contribution of each feature in a particular cluster and even the importance of each feature on the whole (see Section \ref{sec: high-dimensional data}), enabling it not only to be used for supervised feature selection tasks but also to be less prone to overfitting in case of high-dimensional settings. Both membership fuzzification and feature weighting are formulated by an entropy approach. Alternatively, the technique utilized in the standard FCM can also be used for these purposes \citep{chan2004optimization, huang2005automated}. However, a major limitation of this technique, for example in feature weighting, arises when a feature has zero variance in an iteration, which in turn makes the denominator equal zero in updating feature weights, resulting in the algorithm failing to work.}   

So far we have discussed the phase of parameter learning. Now, let $ \mathbf{x}' $ be a new data point, and we are to make a prediction about its label by the SFP method. To do this, we first estimate the membership vector $ \mathbf{u}' $ associated with $ \mathbf{x}' $ by solving the following problem:
\begin{alignat}{4}
	\label{eq: SFP-prediction-membership-problem}
	& \underset{\mathbf{u}'}{\text{minimize}} & \quad & \sum_{j=1}^{k} u_{j}' {\lVert \mathbf{x}' - \mathbf{v}_j \rVert}_{\mathbf{w}_j}^2 + \gamma \sum_{j=1}^{k} u_{j}' \ln u_{j}'  \nonumber \\  
	& \text{subject to} & \quad & \sum_{j=1}^{k} u_{j}' = 1, \quad u_{j}' \geq 0, \quad j=1, \dots, k.
\end{alignat} 
where $ \mathbf{v}_j $s and $ \mathbf{w}_j $s are those that have been calculated in the training phase by solving the problem given in \eqref{eq: SFP}. Centers and weights are expected not to change significantly by only adding one new data point, making it valid to use those obtained during the training phase. Furthermore, the label of the new point is unknown, causing \eqref{eq: SFP} to turn into \eqref{eq: SFP-prediction-membership-problem} for prediction phase (note that our focus is on supervised problems, and other scenarios, such as semi-supervised, are beyond the scope of this paper). Fortunately, there is a closed-form solution for \eqref{eq: SFP-prediction-membership-problem}, provided by the following theorem.
\begin{theorem}
	\label{theorem: max-entropy}
	Let $ S = \{\bm{\theta} \in \mathbb{R}^m \mid \sum_{i=1}^{m} \theta_i = 1, \ \theta_i \geq 0, \ i=1, \dots, m \} $ be a standard simplex and $ \gamma > 0 $ be constant. The solution to the problem
	\begin{equation*}
		\label{eq: max-entropy-problem}
		\bm{\theta}^* \in \underset{\bm{\theta} \in S}{\argmin} \ \bigg\{\sum_{i=1}^{m} a_i \theta_i + \gamma \sum_{i=1}^{m} \theta_i \ln \theta_i \bigg\}
	\end{equation*}
	is
	\begin{equation*}
		\label{eq: max-entropy-solution}
		\theta^*_i = \frac{\exp(-a_i/\gamma)}{\sum_{i'=1}^{m} \exp(-a_{i'}/\gamma) }, \quad i=1, \dots, m.
	\end{equation*} 
\end{theorem}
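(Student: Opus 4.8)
The plan is to recognize the objective
$f(\bm{\theta}) = \sum_{i=1}^m a_i\theta_i + \gamma\sum_{i=1}^m \theta_i\ln\theta_i$
as, up to an additive constant and a positive factor, a Kullback--Leibler divergence between $\bm{\theta}$ and the claimed optimum, and then invoke Gibbs' inequality. Concretely, I would set $Z = \sum_{i'=1}^m \exp(-a_{i'}/\gamma)$ and $q_i = \exp(-a_i/\gamma)/Z$, so that $\bm{q} \in S$ has all coordinates strictly positive and $a_i = -\gamma\ln q_i - \gamma\ln Z$. Substituting this into $f$ and using $\sum_{i=1}^m \theta_i = 1$ for every $\bm{\theta}\in S$ yields
\[
	f(\bm{\theta}) = \gamma\sum_{i=1}^m \theta_i\ln\frac{\theta_i}{q_i} \ - \ \gamma\ln Z = \gamma\,\mathrm{KL}(\bm{\theta}\,\|\,\bm{q}) \ - \ \gamma\ln Z,
\]
where the relative entropy uses the convention $0\ln 0 \triangleq 0$ adopted earlier. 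Since $\gamma > 0$ and $\gamma\ln Z$ is independent of $\bm{\theta}$, minimizing $f$ over $S$ is equivalent to minimizing $\mathrm{KL}(\bm{\theta}\,\|\,\bm{q})$ over $S$.

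The key step is then Gibbs' inequality: $\mathrm{KL}(\bm{\theta}\,\|\,\bm{q}) \geq 0$ for all $\bm{\theta}\in S$, with equality iff $\bm{\theta} = \bm{q}$. I would prove this in one line from $\ln t \leq t - 1$ (equality iff $t=1$): restricting the sum to the indices with $\theta_i > 0$,
\[
	-\mathrm{KL}(\bm{\theta}\,\|\,\bm{q}) = \sum_{i:\,\theta_i>0} \theta_i\ln\frac{q_i}{\theta_i} \leq \sum_{i:\,\theta_i>0} \theta_i\Big(\frac{q_i}{\theta_i} - 1\Big) = \sum_{i:\,\theta_i>0} q_i - \sum_{i=1}^m \theta_i \leq 0,
\]
and the first inequality is strict unless $q_i/\theta_i = 1$ for every such $i$. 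Because $\bm{q}$ has no zero coordinate, equality forces $\bm{\theta} = \bm{q}$. Hence $\bm{\theta}^* = \bm{q}$ is the unique minimizer, which is exactly the asserted closed-form expression.

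An alternative route is the classical Lagrange-multiplier computation: the constraints $\theta_i \geq 0$ are never active at a minimizer, since $\partial_{\theta_i}\big(\gamma\theta_i\ln\theta_i\big) = \gamma(\ln\theta_i + 1) \to -\infty$ as $\theta_i \to 0^+$, so any point on the boundary of $S$ can be strictly improved by moving inward along the simplex. Stationarity of the Lagrangian $L(\bm{\theta},\mu) = f(\bm{\theta}) - \mu\big(\sum_{i=1}^m \theta_i - 1\big)$ then gives $a_i + \gamma(\ln\theta_i + 1) = \mu$, i.e.\ $\theta_i \propto \exp(-a_i/\gamma)$, with $\mu$ fixed by $\sum_{i=1}^m \theta_i = 1$; and convexity of $f$ (its Hessian is diagonal with entries $\gamma/\theta_i > 0$) guarantees this stationary point is the global minimum. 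In either route the one subtlety to handle with care is the behaviour at the boundary of the simplex --- ensuring the $0\ln 0$ convention is respected and that the minimizer lies in the relative interior. The $\mathrm{KL}$ formulation is attractive precisely because it disposes of this cleanly: $\bm{q}$ is strictly positive and Gibbs' inequality covers all of $S$ at once, so no separate boundary analysis is needed.
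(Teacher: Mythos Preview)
Your proof is correct. Your primary argument---rewriting the objective as $\gamma\,\mathrm{KL}(\bm{\theta}\,\|\,\bm{q}) - \gamma\ln Z$ and invoking Gibbs' inequality---is a genuinely different route from the paper's. The paper proceeds exactly along the lines of your ``alternative route'': it drops the nonnegativity constraints, forms the Lagrangian for the equality constraint only, solves the stationarity equations to obtain the softmax form, then checks the second-order sufficient conditions via the diagonal Hessian $\mathrm{diag}(\gamma/\theta_1^*,\dots,\gamma/\theta_m^*)$, and finally appeals to convexity of the objective and feasible region to upgrade the local minimum to a global one.

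The trade-offs are as follows. Your KL formulation is shorter and more self-contained: it delivers global optimality and uniqueness in a single inequality, and it handles the simplex boundary automatically because $\bm{q}$ has strictly positive coordinates and Gibbs' inequality is valid on all of $S$ under the $0\ln 0$ convention. The paper's Lagrangian route is the textbook optimization approach and makes the mechanism (stationarity plus convexity) explicit, but it requires a separate justification for ignoring the inequality constraints and a separate appeal to convexity to pass from local to global; the paper states the convexity step but does not spell out the boundary argument that you give. Either approach is fully adequate here; yours is the cleaner of the two for this particular problem.
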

\begin{proof}
	See \ref{app: proof-max-entropy-theorem} for the proof.
\end{proof}

\added{P3.14}{The objective function in this theorem comprises two terms: a linear function and negative entropy. The intuition behind this theorem is that the solution is a trade-off between sparsity resulting from the linear function and uniformity resulting from the entropy term. For extreme situations, it can be shown that 	
	\begin{equation*}
		\label{eq: extreme-solution}
		\lim\limits_{\gamma \rightarrow 0} \theta^*_i = 
		\begin{cases}
			\frac{1}{L_i}, & \quad i \in \underset{j}{\argmin} \ a_j  \\
			0, & \quad \text{otherwise}	  
		\end{cases},
		\hspace{.1\textwidth}
		\lim\limits_{\gamma \rightarrow +\infty} \theta^*_i = \frac{1}{m}, \quad i=1, \dots, m.  
	\end{equation*} 
	where $ L_i = \lvert \underset{j}{\argmin} \ a_j \lvert $ is the number of $ a_i $s with the same value as the minimum (we consider this normalization for theoretical reasons; in practice, $ L_i $ is 1). From a geometric perspective, $ \gamma $ somehow controls the distance between the solution and the simplex middle (All parameters are equal). As $ \gamma \rightarrow 0 $ and as $ \gamma \rightarrow +\infty $ the solution approaches a vertex and the middle, respectively.
} 

By Theorem \ref{theorem: max-entropy}, we can immediately write down the memberships of the new point as follows: 
\begin{equation}
	\label{eq: SFP-prediction-membership-closed-form}
	u_j' = \frac{\exp(-d_j'/\gamma)}{\sum_{j'=1}^{k} \exp(-d_{j'}'/\gamma)}, \quad  j=1, \dots, k.
\end{equation} 
where $ d_j' = {\lVert \mathbf{x}' - \mathbf{v}_j \rVert}_{\mathbf{w}_j}^2 $. Because the membership vector, $ \mathbf{u}' $, can be interpreted as probabilities that the new point, $ \mathbf{x}' $, belongs to clusters, it is natural to use that as weights in averaging label prototypes $ \mathbf{z}_j $s, obtained by solving \eqref{eq: SFP}, to predict labels. Hence, once $ \mathbf{u}' $ is computed, the label of $ \mathbf{x}' $ can be estimated by
\begin{equation}
	\label{eq: SFP-prediction-label}
	\hat{y}' = \underset{y \in \mathcal{Y}}{\argmin} \ \ell\left(y, \sum_{j=1}^{k} u_j' \mathbf{z}_j\right).
\end{equation}
Choosing a proper loss function depends on the type of a problem and procedure of evaluation. For example, if we are given a dataset, and the prediction will be scored by hinge loss, it is reasonable to employ the hinge loss in \eqref{eq: SFP}. In general, one can use \textit{logloss} and \textit{squared error} (SE) as representative loss functions for classification and regression, respectively. It is worthwhile to mention that after estimating the parameters, one can also form the \textit{distance matrix} $ \mathbf{D} = [d_{ij}] $, where $ d_{ij} = {\lVert \mathbf{x}_i - \mathbf{v}_j \rVert}_{\mathbf{w}_j}^2 + \alpha \ell(y_i,\mathbf{z}_j) $, as a new representation of data to generate informative features. Particularly, in the case of $ k < p $, the representation leads to a supervised dimensionality reduction procedure. In this paper, we only consider the classification scenario. 

\subsection{Block Coordinate Descent Solver for SFP} \label{sec: BCD}
The block coordinate descent (BCD) is based on the divide-and-conquer idea that can be generally utilized in a wide range of optimization problems. It operates by partitioning variables into disjoint blocks and then iteratively optimizes the objective function with respect to variables of a block while all others are kept fixed. 

Having been inspired by \replacedmy{PA}{the Lloyd algorithm for k-means}{the conventional k-means algorithm} \deletedmy{EiC}{hartigan1975clustering, hartigan1979algorithm}, we develop a BCD-based scheme in order to find a local minimizer for the SFP problem presented in \eqref{eq: SFP}. Consider four blocks of parameters: memberships ($ \mathbf{U} $), centers ($ \mathbf{V} $), label prototypes ($ \mathbf{Z} $), and weights \replacedmy{PA}{($ \mathbf{W} $)}{$ \mathbf{W} $}. Starting with initial values of centers, label prototypes, and weights, in each step of the current iteration, three of the blocks are kept fixed, and the objective function is minimized with respect to the others. One can initialize parameters by randomly choosing $ k $ observations $ \{(\mathbf{x}_{i_j},y_{i_j})\}_{j=1}^k $ from the training set and uses data points $ \{\mathbf{x}_{i_j}\}_{j=1}^k $ as initial centers and their corresponding label prototypes, $ \{ \underset{\mathbf{z}}{\argmin}\ \ell(y_{i_j},\mathbf{z}) \}_{j=1}^k $, as initial label prototypes. Weights can be simply initialized from a uniform distribution, i.e., $ w_{jl}=1/p $.

In the first step of each iteration, where centers, prototypes, and weights are fixed, the problem becomes
\begin{alignat}{4}
	\label{eq: SFP-BCD-membership-update-problem}
	& \underset{\mathbf{U}}{\text{minimize}} & \quad & \sum_{j=1}^{k}\sum_{i=1}^{n} u_{ij} d_{ij} + \gamma \sum_{j=1}^{k}\sum_{i=1}^{n} u_{ij} \ln u_{ij} \nonumber \\ 
	& \text{subject to} & \quad & \sum_{j=1}^{k} u_{ij} = 1, \quad i=1, \dots, n,  \nonumber \\
	&          		    & \quad & u_{ij} \geq 0, \quad i=1, \dots, n, \quad j=1, \dots, k.
\end{alignat} 
where $ d_{ij} = {\lVert \mathbf{x}_i - \mathbf{v}_j \rVert}_{\mathbf{w}_j}^2 + \alpha \ell(y_i,\mathbf{z}_j) $. The problem \eqref{eq: SFP-BCD-membership-update-problem} can be viewed as $ n $ separate subproblems that each have the same form as that inspected in Theorem \ref{theorem: max-entropy}. As a result, the estimated memberships are as follows:
\begin{equation}
	\label{eq: SFP-BCD-membership-update}
	\hat{u}_{ij} = \frac{\exp(-d_{ij}/\gamma)}{\sum_{j'=1}^{k} \exp(-d_{ij'}/\gamma) }, \quad  i=1, \dots, n, \quad j=1, \dots, k.
\end{equation} 
The intuition is that if the $ i $th observation is close to the $ j $th cluster, i.e., $ d_{ij} $ is small, then clearly this update equation results in a higher degree of membership. Having computed memberships, in the second step, we can update centers by solving the following problem:
\begin{equation}
	\label{eq: SFP-BCD-center-update-problem}
	\underset{\mathbf{V}}{\text{minimize}} \ F(\mathbf{V}) = \sum_{j=1}^{k}\sum_{i=1}^{n} u_{ij} {\lVert \mathbf{x}_i - \mathbf{v}_j \rVert}_{\mathbf{w}_j}^2	
\end{equation} 
assuming $ \sum_{i=1}^{n} u_{ij} > 0 $, setting the gradient of $ F $ with respect to $ \mathbf{v}_j $ equal to zero yields the center updating rule:
\begin{equation}
	\label{eq: SFP-BCD-center-update}
	\hat{\mathbf{v}}_j = \frac{\sum_{i=1}^{n} u_{ij} \mathbf{x}_i}{\sum_{i=1}^{n} u_{ij}} \quad  j=1, \dots, k.
\end{equation} 

It is noteworthy that new centers, $ \hat{\mathbf{v}}_j $s, do not depend on weights and are updated only through data points and memberships. In parallel with centers, label prototypes are updated as follows: 
\begin{equation}
	\label{eq: SFP-BCD-prototype-update-problem}
	\hat{\mathbf{z}}_j = \underset{\mathbf{z} \in \mathcal{Z}}{\argmin} \sum_{i=1}^{n} u_{ij} \ell(y_i,\mathbf{z}),\quad j = 1, \dots, k.
\end{equation} 

The solution to this problem relies on the loss function and the memberships. For many common loss functions, such as squared error and logloss, prototype $ \hat{\mathbf{z}}_j $ can be easily obtained in closed form (see Table \ref{tab: loss functions}). Note that according to \eqref{eq: SFP-BCD-prototype-update-problem}, label prototypes are updated without using centers, enabling simultaneous calculation of both blocks, which in turn makes the updating process faster. Finally, weights can be updated by applying the Theorem \ref{theorem: max-entropy}, resulting in formula analogous to that for memberships:
\begin{equation}
	\label{eq: SFP-BCD-weight-update}
	\hat{w}_{jl} = \frac{\exp(-s_{jl}/\lambda)}{\sum_{l'=1}^{p} \exp(-s_{jl'}/\lambda) }, \quad j=1, \dots, k, \quad l=1, \dots, p.
\end{equation} 
where $ s_{jl} = \sum_{i=1}^{n} u_{ij} (x_{il}-v_{jl})^2 $. Since $ s_{jl} $ represents the variance of the $ l $th feature in $ j $th cluster, it is expected the greater $ s_{jl} $ is, the smaller weight will be obtained, which is consistent with \eqref{eq: SFP-BCD-weight-update}. The whole algorithm of SFP for both the training and test phases is summarized in Algorithm \ref{alg: BCD for SFP}.

\begin{table}[!t] 
	\centering
	\caption{Loss functions and their prototypes.}
	
	\def\arraystretch{1}
	\begin{tabu} to \linewidth {X[6,c] | X[4,c]}
		\tabucline[1pt]-
		\begin{tabular}{c}
			Loss function \\
			$ (y,z) \mapsto \ell(y,z) $
		\end{tabular} & \begin{tabular}{c}
			Prototype \\
			$ \underset{\mathbf{z} \in \mathcal{Z}}{\argmin} \sum_{i=1}^{n} u_i \ell(y_i,\mathbf{z}) $
		\end{tabular} \\
		\tabucline[1pt]-\tabucline[1pt]-

		\begin{tabular}{c}
			classification error; $ y, z \in \{1, \dots, M\} $ \\
			$ \ell(y,z) = \mathbbm{1}(y \neq z) $
		\end{tabular} &  --- \\ 
		\tabucline[1pt]-
		
		\begin{tabular}{c}
			logloss; $ y \in \{1, \dots, M\}, $ \\ $ \mathbf{z} \in \mathbb{R}^M, \mathbf{z} \geq 0, \sum_{m=1}^{M} z_m=1 $ \\
			$ \ell(y,\mathbf{z}) = -\sum_{m=1}^{M} \mathbbm{1}(y=m) \ln z_m $
		\end{tabular}  & $ z_m = \frac{\sum_{i=1}^{n} u_i \mathbbm{1}(y_i=m)}{\sum_{i=1}^{n} u_i} $ \\
		\tabucline[1pt]-

		\begin{tabular}{c}
			hinge; $ y \in \{-1,1\}, z \in \mathbb{R} $ \\
			$ \ell(y,z) = \max(0,1-y z) $
		\end{tabular}  & --- \\ 
		\tabucline[1pt]-

		\begin{tabular}{c}
			logistic; $ y \in \{-1,1\}, z \in \mathbb{R} $ \\
			$ \ell(y,z) = \ln(1+\exp(-y z)) $
		\end{tabular}  & $ \ln\bigg(\frac{\sum_{i=1}^{n} u_i \mathbbm{1}(y_i=1)}{\sum_{i=1}^{n} u_i \mathbbm{1}(y_i=-1)}\bigg) $ \\ 
		\tabucline[1pt]-
		
	\end{tabu}
	\label{tab: loss functions} 
\end{table}

\begin{algorithm}[!b]
	\caption{The SFP algorithm \label{alg: BCD for SFP}}
	\KwIn{$ \{ (\mathbf{x}_i, y_i) \}_{i=1}^{n} $ (training set), $ \mathbf{x}' $ (test data point), $ k \geq 2 $, $ \alpha \geq 0 $, $ \gamma > 0 $, $ \lambda > 0 $;}
	\KwOut{$ \hat{y}' $ (estimated label of test data point);}
	\tcp{training phase}
	Initialize centers, label prototypes, and weights\;
	\Repeat{centers do not change}{
		Update the distance matrix between training data points and centers by $ d_{ij} =  {\lVert \mathbf{x}_i - \mathbf{v}_j \rVert}_{\mathbf{w}_j}^2 + \alpha \ell(y_i,\mathbf{z}_j) $ \;
		
		Update the membership matrix according to \eqref{eq: SFP-BCD-membership-update}\;
		
		Update the centers and label prototypes according to \eqref{eq: SFP-BCD-center-update} and \eqref{eq: SFP-BCD-prototype-update-problem}, respectively\;
		
		Update the weights according to \eqref{eq: SFP-BCD-weight-update}\;
	}
	\tcp{Test phase}
	
	Compute the distances between test data point and centers by $ d_j' = {\lVert \mathbf{x}' - \mathbf{v}_j \rVert}_{\mathbf{w}_j}^2 $\;
	
	Compute the membership vector of test data point according to \eqref{eq: SFP-prediction-membership-closed-form}\;
	Make a prediction using \eqref{eq: SFP-prediction-label}\;
\end{algorithm}

\subsection{The Computational Complexity of SFP} \label{sec: computational complexity}
Since \replacedmy{PA}{SFP}{the SFP} is a supervised extension of the k-means algorithm, obtained by entropy-based fuzzification of partition and adoption of two additional steps to compute the label prototypes and feature weights, it inherits the scalability of k-means-like algorithms in manipulating large datasets, becoming practical for large-scale machine learning applications. At each iteration of BCD, the time complexity of four major steps in the training phase can be investigated as follows:
\begin{itemize}
	\item \textbf{Updating memberships.}  
	After initializing the feature weights, the centers, and the label prototypes, a vector of cluster membership is assigned to each data point by calculating the summation $ d_{ij} = \sum_{l=1}^{p} w_{jl} (x_{il}-v_{jl})^2 + \alpha \ell(y_i,\mathbf{z}_j) $ and then using \eqref{eq: SFP-BCD-membership-update}. Hence, the complexity of this step is $ \mathcal{O}(nk(p+r)) $ operations, where $ r $ is the cost of computing the loss, $ \ell(y,\mathbf{z}) $. For logloss, $ r $ is equal to the number of classes, $ M $ (see Table \ref{tab: loss functions}).
	
	\item \textbf{Updating centers.} 
	Given the membership matrix $ \mathbf{U} $, \eqref{eq: SFP-BCD-center-update} implies that $ \mathcal{O}(nkp) $ operations are needed to update $ k $ centers $ \mathbf{v}_1, \dots, \mathbf{v}_k $.
	
	\item \textbf{Updating label prototypes.} The cost of this step depends on the employed loss function. Let $ q $ be the cost of prototype computation (e.g., according to Table \ref{tab: loss functions}, $ q $ for logloss is $ Mn $). Hence, it is clear that the complexity of this step is $ \mathcal{O}(kq) $.
	
	\item \textbf{Updating feature weights.} First variances $ s_{jl} = \sum_{i=1}^{n} u_{ij} (x_{il}-v_{jl})^2 $ are computed and then weights are updated by \eqref{eq: SFP-BCD-weight-update}, which requires $ \mathcal{O}(nkp) $ operations.
\end{itemize} 
As a result, the complexity of SFP becomes $ \mathcal{O}(nk(2p+r)T + kqT) $, where $ T $ is the total number of iterations. Experimental results show that the BCD solver for SFP often converges in fewer than $ 15 $ iterations although a 10-iteration run ($ T = 10 $) is sufficient for most supervised applications, resulting in the time complexity becoming practically $ \mathcal{O}(nkp) $. Now, consider the test phase of SFP, where we are given $ m $ points to predict their labels. For each point, the weighted Euclidean distance to estimated centers of the training phase should be calculated, which needs $ \mathcal{O}(kp) $ operations. Thus, the time complexity of the test phase becomes $ \mathcal{O}(mkp) $. 

In terms of space complexity, the SFP algorithm needs $ \mathcal{O}(np) $ storage for data points; $ \mathcal{O}(nk) $ for membership matrix; $ \mathcal{O}(kp) $ for centers; $ \mathcal{O}(kd) $ for prototypes, where $ d $ is the dimension of the label prototype space; and $ \mathcal{O}(kp) $ for feature weights, adding up $ \mathcal{O}(np+nk+kp+kd+kp) $ storage. During the test phase, \replacedmy{PA}{SFP}{the SFP} requires $ \mathcal{O}(mp+mk) $ space to store test data and their memberships. In general, both the time and space complexity of SFP are linear in the size $ n $, the dimension $ p $, and the number of clusters $ k $. Table \ref{tab: computational complexity} summarizes the computational complexity of this method.  

\begin{table}[!ht] 
	\centering
	\caption{Computational complexity of SFP.}
	\def\arraystretch{1}
	\begin{tabu} to \linewidth {X[.7,c] | X[c] | X[c]}
		\tabucline[1pt]-
		& Time complexity & Space complexity \\
		\tabucline[1pt]-\tabucline[1pt]-
		
		Training phase & $ \mathcal{O}(nk(2p+r)T + kqT) $ & $ \mathcal{O}(np+nk+kp+kd+kp) $\\ \tabucline[1pt]-
		
		Testing phase  & $ \mathcal{O}(mkp) $ & $ \mathcal{O}(mp+mk) $ \\
		\tabucline[1pt]-
	\end{tabu}
	\label{tab: computational complexity} 
\end{table}

\subsection{SFP as a Generalized Version of RBF Networks} \label{sec: SFP and RBF}
A closer look at \eqref{eq: SFP-prediction-membership-closed-form} and \eqref{eq: SFP-prediction-label} reveals that the output prediction of SFP can be represented by a linear combination of normalized radial basis functions (RBF). To clarify this, let $ \mathbf{x} $ be an arbitrary point whose label, $ \hat{y}(x) $, we wish to estimate, and consider an SFP with logistic loss function, aimed at solving a binary classification problem. In this case, \eqref{eq: SFP-prediction-label} simplifies to
\begin{equation}
	\hat{y}(x) = \text{sign} \bigg\{ \sum_{j=1}^{k} z_j u_j(x) \bigg\},
\end{equation}
where $ u_j(x) $ is calculated by \eqref{eq: SFP-prediction-membership-closed-form}. Substituting in yields
\begin{equation}
	\label{eq: SFP as RBF network}
	\hat{y}(x) = \text{sign} \bigg\{ \frac{\sum_{j=1}^{k} z_j \kappa_j(\mathbf{x},\mathbf{v}_j)}{\sum_{j=1}^{k} \kappa_j(\mathbf{x},\mathbf{v}_j)} \bigg\},
\end{equation}   
where 
\begin{equation*}
	\kappa_j(\mathbf{x},\mathbf{v}) = \exp\bigg(-\frac{{\lVert \mathbf{x} - \mathbf{v} \rVert}_{\mathbf{w}_j}^2}{\gamma}\bigg)
\end{equation*}
is a Gaussian RBF kernel, and $ u_j(x) $ is consequently a normalized RBF. Hence, the SFP is a function approximator that has a more general form than RBF networks \replacedmy{EiC3}{\citep{moody1989fast}}{moody1989fast, haykin2009neural} since it uses weighted Euclidean distance rather than the typical one. We, therefore, expect \replacedmy{PA}{SFP}{the SFP} to be a more flexible model capable of adapting to more complex data structures. 

Another special case of SFP can be derived by setting the number of clusters, $ k $, so large that each cluster contains only one or few points (note that in general the final partition is fuzzy, i.e., $ \gamma > 0 $, and we use the concept of crisp partition just to give an intuitive interpretation). With this setting, the set of centers, $ \{z_j\}_{j=1}^k $, become almost identical to the set of labels, $ \{y_i\}_{j=1}^n $, making the approximating function given in \eqref{eq: SFP as RBF network} can be viewed as Nadaraya-Watson kernel regression \citep{nadaraya1964estimating, watson1964smooth}. However, due to efficiently learning the weights of features, \replacedmy{PA}{SFP}{the SFP} suffers less from the curse of dimensionality when the number of features grows. This view is helpful in that some theoretical results for SFP can be investigated via existing results for Nadaraya-Watson kernel regression. 

\subsection{SFP and Mixtures of Experts} \label{sec: SFP and MoE}
Mixtures of experts (ME) model \replacedmy{EiC3}{\citep{jacobs1991adaptive}}{jacobs1991adaptive, jordan1994hierarchical}, which has been widely used in machine learning and statistics community, refers to a broad class of mixture models intended for supervised learning problems. In this section, we study the relationship between SFPs and a generative variant of mixtures of experts (GME), originally introduced by \citet{xu1995alternative}. The cluster-weighted models, presented later in \replacedmy{EiC3}{\citep{gershenfeld1997nonlinear}, also follows}{gershenfeld1997nonlinear, ingrassia2012local, subedi2013clustering, ingrassia2014model, ingrassia2015erratum, punzo2016parsimonious also follow}, the same framework as GME but consider further details and extensions. 

Suppose that $ \mathcal{D} = \{ (\mathbf{x}_i, y_i) \}_{i=1}^{n} $ are random samples drawn from the following parametric mixture model for the joint density:
\begin{equation} \label{eq: GME}
	p(\mathbf{x}, y; \boldsymbol{\psi}) = \sum_{j=1}^{k} \pi_j q(\mathbf{x};\boldsymbol{\eta}_j) f(y|\mathbf{x}; \boldsymbol{\theta}_j),
\end{equation} 
\replaced{P3.18}{where the \textit{mixing proportions} $ \pi_j$ are nonnegative and sum to one, i.e., $ \pi_j \geq 0, \sum_{j=1}^{k} \pi_j = 1 $; $ \boldsymbol{\eta}_j $ and $ \boldsymbol{\theta}_j $ are the parameter vectors for the $ j $th mixture component; the vector $ \boldsymbol{\psi} = (\pi_j, \boldsymbol{\eta}_j, \boldsymbol{\theta}_j)_{j=1}^{k} $ contains all the unknown parameters; and $ q(\mathbf{x};\boldsymbol{\eta}_j) $ and $ f(y|\mathbf{x}; \boldsymbol{\theta}_j) $ constitute component densities, typically assumed to have multivariate normal and exponential family forms, respectively.}{where $ \boldsymbol{\psi} = (\pi_j, \boldsymbol{\eta}_j, \boldsymbol{\theta}_j)_{j=1}^{k} $ are parameters to be estimated; \textit{mixing proportions} $ \pi_j$ are nonnegative quantities that sum to one, i.e., $ \sum_{j=1}^{k} \pi_j = 1, \pi_j \geq 0, j = 1, \dots, k $; and $ q(\mathbf{x};\boldsymbol{\eta}_j) $ and $ f(y|\mathbf{x}; \boldsymbol{\theta}_j) $ constitute element densities, typically assumed to have multivariate normal and exponential family forms, respectively} The conditional density of $ y $ given $ \mathbf{x} $ is
\begin{equation} \label{eq: GME-conditional}
	p(y|\mathbf{x}; \boldsymbol{\psi}) = \sum_{j=1}^{k} \pi_j g_j(\mathbf{x};\boldsymbol{\eta}) f(y|\mathbf{x}; \boldsymbol{\theta}_j), \quad g_j(\mathbf{x};\boldsymbol{\eta})=\frac{\pi_j q(\mathbf{x};\boldsymbol{\eta}_j)}{\sum_{j'=1}^{k} \pi_{j'} q(\mathbf{x};\boldsymbol{\eta}_{j'})},
\end{equation}   
where $ g_j(\mathbf{x};\boldsymbol{\eta}) $ are referred to as the \textit{gating function}, equivalent to the one defined in the typical version of ME introduced by \citet{jacobs1991adaptive}.
Model \eqref{eq: GME} can also be represented by unobserved latent variables $ h_i \in \{1, \dots, k \} $, with the discrete distribution \replacedmy{PA}{$ \mathbb{P}(h_i = j) = \pi_j $}{$ \mathbb{P}(h = j) = \pi_j $}, $ j = 1, \dots, k $, as follows: 
\begin{align}
	\label{eq: GME-latent}
	& p(y_i|\mathbf{x}_i, h_i=j) = f(y_i|\mathbf{x}_i; \boldsymbol{\theta}_j), \nonumber \\ 
	& p(\mathbf{x}_i| h_i=j) = q(\mathbf{x}_i;\boldsymbol{\eta}_j), \nonumber \\ 
	& \mathbb{P}(h_i = j) = \pi_j.
\end{align} 

To estimate the parameters, one can take the maximum likelihood approach. The regularized log-likelihood function for the collected data is 
\begin{equation}
	\label{eq: GME-loglikelihood}
	\ell(\boldsymbol{\psi}) = \sum_{i=1}^{n} \ln \bigg\{ \sum_{j=1}^{k} \pi_j q(\mathbf{x}_i;\boldsymbol{\eta}_j) f(y_i|\mathbf{x}_i; \boldsymbol{\theta}_j) \bigg\} + \mathcal{P}(\boldsymbol{\psi}) 
\end{equation} 
where $ \mathcal{P}(\boldsymbol{\psi}) $ is a penalty term. A typical method to maximize this function is EM algorithm \citep{dempster1977maximum}. An EM iteration monotonically increases the log-likelihood function until it reaches a local maximum, and there is no guarantee that it converges to a maximum likelihood estimator. Before proceeding with EM, we need the complete log-likelihood corresponding to complete data $ \mathcal{D}_c = \{ (\mathbf{x}_i, y_i, h_i) \}_{i=1}^{n} $, computed by the following:
\begin{equation}
	\label{eq: GME-complete-likelihood}
	\ell_c(\boldsymbol{\psi}) = \sum_{i=1}^{n} \sum_{j=1}^{k} \mathbbm{1}(h_i=j) \big\{ \ln \pi_j + \ln q(\mathbf{x}_i;\boldsymbol{\eta}_j) + \ln f(y_i|\mathbf{x}_i; \boldsymbol{\theta}_j) \big\} + \mathcal{P}(\boldsymbol{\psi}).
\end{equation}   
Starting with initial values $ \boldsymbol{\psi}^0 $, then for $ t=1, \dots, T $, the EM iteration alternates between the following two steps: 
\begin{itemize}
	\item \textbf{E-step:} The expectation of $ \ell_c(\boldsymbol{\psi}) $ conditional on data and current estimates of the parameters is computed
	\begin{align}
		\label{eq: GME-E-step-surrogate}
		Q(\boldsymbol{\psi}, \boldsymbol{\psi}^t) & = \mathbb{E}\big\{ \ell_c(\boldsymbol{\psi})|\mathcal{D}; \boldsymbol{\psi}^t \big\} \nonumber \\
		& = \sum_{i=1}^{n} \sum_{j=1}^{k} u_{ij}^t \big\{ \ln \pi_j + \ln q(\mathbf{x}_i;\boldsymbol{\eta}_j) + \ln f(y_i|\mathbf{x}_i; \boldsymbol{\theta}_j) \big \} + \mathcal{P}(\boldsymbol{\psi}),
	\end{align}   
	where 
	\begin{align}
		\label{eq: GME-E-step-posterior}
		u_{ij}^t & = \mathbb{E}\big\{\mathbbm{1}(h_i=j)|\mathbf{x}_i, y_i;\boldsymbol{\psi}^t \big\} \nonumber \\  
		& = \mathbb{P}\big( h_i=j|\mathbf{x}_i, y_i;\boldsymbol{\psi}^t \big) \nonumber \\ 
		& = \frac{\pi_j^t q(\mathbf{x}_i;\boldsymbol{\eta}_j^t) f(y_i|\mathbf{x}_i; \boldsymbol{\theta}_j^t)}{\sum_{j'=1}^{k} \pi_{j'}^t q(\mathbf{x}_i;\boldsymbol{\eta}_{j'}^t) f(y_i|\mathbf{x}_i; \boldsymbol{\theta}_{j'}^t)}, \quad i=1, \dots, n, \quad j=1, \dots, k,
	\end{align}
	are posterior probabilities of memberships.
	
	\item \textbf{M-step:} The function $ Q(\boldsymbol{\psi}, \boldsymbol{\psi}^t) $ is maximized over the parameter space to obtain new estimates $ \boldsymbol{\psi}^{t+1} $; that is
	\begin{equation}
		\label{eq: GME-M-step}
		\boldsymbol{\psi}^{t+1} \in \underset{\boldsymbol{\psi}}{\argmax}\ Q(\boldsymbol{\psi}, \boldsymbol{\psi}^t).
	\end{equation}
	According to \eqref{eq: GME-E-step-surrogate}, the three blocks $ (\pi_1, \dots, \pi_k) $, $ (\boldsymbol{\eta}_1, \dots, \boldsymbol{\eta}_k) $, and $ (\boldsymbol{\theta}_1, \dots, \boldsymbol{\theta}_k) $ can be updated independently. Assuming $ \mathcal{P}(\boldsymbol{\psi}) $ is independent of $ \pi_j $ (as is usually the case), update formula for mixing proportions can be easily calculated in closed form by
	\begin{equation}
		\label{eq: GME-mixing-update}
		\pi_j^{t+1} = \frac{1}{n} \sum_{i=1}^{n} u_{ij}^t, \quad j=1, \dots, k. 
	\end{equation}
	However, updating $ \boldsymbol{\eta}_j $ and $ \boldsymbol{\theta}_j $ depends on the element densities.
\end{itemize}

Being inspired by \citep{neal1998view}, we can view EM algorithm for GME as an alternating minimization procedure on the new function
\begin{align}
	\label{eq: GME-corresponding-BCD}
	J(\mathbf{U},\boldsymbol{\psi}) = & \sum_{i=1}^{n} \sum_{j=1}^{k} u_{ij} D(\mathbf{x}_i; \boldsymbol{\nu}_j) + \sum_{i=1}^{n} \sum_{j=1}^{k} u_{ij}  \ell(y_i, \mathbf{x}_i; \boldsymbol{\theta}_j) \nonumber\\ 
	&+ \sum_{i=1}^{n} \sum_{j=1}^{k} u_{ij} \ln u_{ij} - \mathcal{P}(\boldsymbol{\psi}),  
\end{align}
where $ \mathbf{U} = [u_{ij}]_{n \times k} $ is a membership matrix that satisfies \eqref{eq: membership matrix}; $  D(\mathbf{x}_i; \boldsymbol{\nu}_j) = -\ln \pi_j - \ln q(\mathbf{x}_i;\boldsymbol{\eta}_j) $, where $ \boldsymbol{\nu}_j = (\pi_j, \boldsymbol{\eta}_j) $; and $ \ell(y_i, \mathbf{x}_i; \boldsymbol{\theta}_j) = -\ln f(y_i|\mathbf{x}_i; \boldsymbol{\theta}_j) $. This view of EM is addressed in the following theorem.
\begin{theorem}
	\label{theorem: EM-vs-BCD}
	Starting with initial values $ \boldsymbol{\psi}^0 $, EM iterations for maximizing $ \ell(\boldsymbol{\psi}) $ are equivalent to BCD scheme for minimizing $ J(\mathbf{U},\boldsymbol{\psi}) $.	
\end{theorem}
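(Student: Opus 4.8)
The plan is to show that one E-step followed by one M-step of EM coincides, term for term, with one sweep of block coordinate descent over the two blocks $\mathbf{U}$ and $\boldsymbol{\psi}$ applied to $J(\mathbf{U},\boldsymbol{\psi})$ in \eqref{eq: GME-corresponding-BCD}, and then to promote this to an equality of the full sequences of iterates by induction on $t$. It is convenient to abbreviate $a_{ij}(\boldsymbol{\psi}) := D(\mathbf{x}_i;\boldsymbol{\nu}_j) + \ell(y_i,\mathbf{x}_i;\boldsymbol{\theta}_j) = -\ln \pi_j - \ln q(\mathbf{x}_i;\boldsymbol{\eta}_j) - \ln f(y_i|\mathbf{x}_i;\boldsymbol{\theta}_j)$, so that $J(\mathbf{U},\boldsymbol{\psi}) = \sum_{i,j} u_{ij} a_{ij}(\boldsymbol{\psi}) + \sum_{i,j} u_{ij}\ln u_{ij} - \mathcal{P}(\boldsymbol{\psi})$; in other words, $J$ is, up to sign, the Neal--Hinton free-energy functional associated with the latent-variable representation \eqref{eq: GME-latent}.

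First I would handle the $\mathbf{U}$-block. Fixing $\boldsymbol{\psi} = \boldsymbol{\psi}^t$, the simplex constraints \eqref{eq: membership matrix} decouple across the $n$ rows of $\mathbf{U}$, so minimizing $J(\cdot,\boldsymbol{\psi}^t)$ reduces to $n$ independent instances of the problem solved in Theorem \ref{theorem: max-entropy} (with $\gamma = 1$ and coefficients $a_{ij}(\boldsymbol{\psi}^t)$). Its closed form gives $u_{ij} = \exp(-a_{ij}(\boldsymbol{\psi}^t)) \big/ \sum_{j'} \exp(-a_{ij'}(\boldsymbol{\psi}^t))$, and expanding $a_{ij}$ shows this equals exactly the posterior $u_{ij}^t$ of \eqref{eq: GME-E-step-posterior}, so the $\mathbf{U}$-update reproduces the E-step. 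Substituting this minimizer back also yields, by the log-sum-exp identity, $\min_{\mathbf{U}} J(\mathbf{U},\boldsymbol{\psi}) = -\sum_i \ln \sum_j \pi_j q(\mathbf{x}_i;\boldsymbol{\eta}_j) f(y_i|\mathbf{x}_i;\boldsymbol{\theta}_j) - \mathcal{P}(\boldsymbol{\psi}) = -\ell(\boldsymbol{\psi})$, a consistency check confirming that profiling $J$ over $\mathbf{U}$ recovers $-\ell(\boldsymbol{\psi})$ exactly.

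Next I would handle the $\boldsymbol{\psi}$-block. With $\mathbf{U} = \mathbf{U}^t := [u_{ij}^t]$ held fixed, the entropy term in $J$ is constant, so minimizing $J$ over $\boldsymbol{\psi}$ is the same as maximizing $\sum_{i,j} u_{ij}^t \big[ \ln \pi_j + \ln q(\mathbf{x}_i;\boldsymbol{\eta}_j) + \ln f(y_i|\mathbf{x}_i;\boldsymbol{\theta}_j) \big] + \mathcal{P}(\boldsymbol{\psi})$, which is precisely $Q(\boldsymbol{\psi},\boldsymbol{\psi}^t)$ in \eqref{eq: GME-E-step-surrogate}; hence the $\boldsymbol{\psi}$-update returns the M-step estimate $\boldsymbol{\psi}^{t+1}$ of \eqref{eq: GME-M-step}. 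Combining the two blocks and inducting on $t$ (both schemes start from $\boldsymbol{\psi}^0$; if both hold $\boldsymbol{\psi}^t$ at the start of a sweep, the $\mathbf{U}$-update produces $\mathbf{U}^t$ and the subsequent $\boldsymbol{\psi}$-update produces $\boldsymbol{\psi}^{t+1}$) shows that the sequences $(\boldsymbol{\psi}^t)_{t \ge 0}$ and the intermediate membership matrices generated by EM and by BCD on $J$ coincide.

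The hard part will be bookkeeping rather than analysis: keeping the signs straight among $J$, $\ell(\boldsymbol{\psi})$, and $Q(\boldsymbol{\psi},\boldsymbol{\psi}^t)$ — in particular that the $-\mathcal{P}(\boldsymbol{\psi})$ in $J$ lines up with the $+\mathcal{P}(\boldsymbol{\psi})$ in the likelihood and in the surrogate — and the mild subtlety that when the M-step $\argmax$ is not a singleton one must stipulate that the two procedures use the same tie-breaking rule for the iterates to agree literally rather than merely up to the choice of maximizer. Once the objective is recast through the abbreviation $a_{ij}$ and Theorem \ref{theorem: max-entropy} is invoked for the $\mathbf{U}$-step, no genuine computational obstacle remains.
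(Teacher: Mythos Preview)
Your proposal is correct and follows essentially the same approach as the paper: invoke Theorem~\ref{theorem: max-entropy} (with $\gamma=1$) to identify the $\mathbf{U}$-minimizer of $J(\cdot,\boldsymbol{\psi}^t)$ with the E-step posteriors \eqref{eq: GME-E-step-posterior}, then observe that $Q(\boldsymbol{\psi},\boldsymbol{\psi}^t) = -J(\mathbf{U}^t,\boldsymbol{\psi}) + \text{const}$ so that the $\boldsymbol{\psi}$-minimization reproduces the M-step \eqref{eq: GME-M-step}. Your additional remarks---the log-sum-exp consistency check $\min_{\mathbf{U}}J(\mathbf{U},\boldsymbol{\psi})=-\ell(\boldsymbol{\psi})$, the explicit induction on $t$, and the tie-breaking caveat---go slightly beyond the paper's more terse presentation but do not alter the argument.
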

\begin{proof}
	See \ref{app: proof-EM-vs-BCD-theorem} for the proof.
\end{proof}

By comparing $ J(\mathbf{U},\boldsymbol{\psi}) $ with the objective function of SFP given in \eqref{eq: SFP}, we realize that they are closely related. In both of them, the first and second term can be interpreted as a representation of the within-cluster variation and the empirical risk, respectively. They both include entropy regularization terms for memberships $ u_{ij} $. The entropy regularization term for weights in \eqref{eq: SFP} is also associated with the penalty term in $ J(\mathbf{U},\boldsymbol{\psi}) $. However, note that in general, the SFP objective function differs from $ J(\mathbf{U},\boldsymbol{\psi}) $ in that it has hyperparameters $ \alpha $ and $ \gamma $ for controlling the strength of risk penalty and crispness of memberships, respectively. To clarify the differences, consider, for example, the effort to derive an SFP with logloss function from \eqref{eq: GME-corresponding-BCD} by setting 
\begin{align}
	\label{eq: SFP-setting-for-GME}
	& \pi_j = \frac{1}{k}, \qquad q(\mathbf{x}_i;\boldsymbol{\eta}_j) = \mathcal{N}(\mathbf{x}_i; \mathbf{v}_j, \frac{\gamma}{2} \text{diag}(w_{j1}^{-1}, \dots, w_{jp}^{-1})) , \nonumber \\ 
	& f(y_i|\mathbf{x}_i; \boldsymbol{\theta}_j) = \prod_{m=1}^{M} z_{jm}^{\mathbbm{1}(y_i=m)}, \qquad \mathcal{P}(\boldsymbol{\psi}) = -\frac{\lambda}{\gamma} \sum_{j=1}^{k}\sum_{l=1}^{p} w_{jl} \ln w_{jl}.
\end{align}     
\addedmy{PA}{where $ \text{diag}(a_1, \dots, a_m) $ denotes a diagonal matrix whose diagonal entries starting in the upper left corner are $ a_1, \dots, a_n $.} Substituting the above equations into \eqref{eq: GME-corresponding-BCD}, multiplying by $ \gamma $, and dropping irrelevant constants; we get the following objective function: 
\begin{align}
	\label{eq: SFP-GME-objecive}
	J(\mathbf{U},\boldsymbol{\psi}) = & \sum_{j=1}^{k}\sum_{i=1}^{n} u_{ij} \bigg\{ {\lVert \mathbf{x}_i - \mathbf{v}_j \rVert}_{\mathbf{w}_j}^2 -\frac{1}{2} \sum_{l=1}^{p} \ln w_{jl} \bigg\} + \gamma \sum_{j=1}^{k}\sum_{i=1}^{n} u_{ij} \ell_{\text{log}}(y_{i},\mathbf{z}_j) \nonumber \\
	& +\gamma\sum_{j=1}^{k}\sum_{i=1}^{n} u_{ij} \ln u_{ij} +  \lambda \sum_{j=1}^{k}\sum_{l=1}^{p} w_{jl} \ln w_{jl}.
\end{align}
where $ \ell_{\text{log}} $ is the logloss function defined in Table \ref{tab: loss functions}. We see that this objective function is very similar to that of SFP except for two major differences: (1) it has the extra term of $ -\frac{1}{2} \sum_{l=1}^{p} \ln w_{jl} $; (2) the two hyperparameters associated with the memberships entropy and the risk surrogate are the same and equal to $ \gamma $; in contrast, in SFPs, we can control the impact of risk independently through the hyperparameter $ \alpha $. Hence, by easily incorporating both the hyperparameters $ \alpha $ and $ \gamma $ into the objective function \eqref{eq: GME-corresponding-BCD}, we obtain a more general framework, called \textit{generalized SFP}, which includes both GME and SFP as special cases, with the following optimization problem:  
\begin{alignat}{4}
	\label{eq: GSFP}
	& \underset{\mathbf{U},\boldsymbol{\psi}}{\text{minimize}} & \quad & \sum_{i=1}^{n} \sum_{j=1}^{k} u_{ij} D(\mathbf{x}_i; \boldsymbol{\nu}_j) + \alpha \sum_{i=1}^{n} \sum_{j=1}^{k} u_{ij}  \ell(y_i, \mathbf{x}_i; \boldsymbol{\theta}_j) \nonumber\\ 
	& & \quad & + \gamma \sum_{i=1}^{n} \sum_{j=1}^{k} u_{ij} \ln u_{ij} + \mathcal{P}(\boldsymbol{\psi}), \nonumber \\ 
	& \text{subject to} & \quad & \sum_{j=1}^{k} u_{ij} = 1, \quad i=1, \dots, n,  \nonumber \\
	&          		    & \quad & u_{ij} \geq 0, \quad i=1, \dots, n, \quad j=1, \dots, k.
\end{alignat}   
In this new framework, $  D $ and $ \ell $ can be any nonnegative functions, and in contrast to \eqref{eq: GME-corresponding-BCD}, are not limited to functions described in terms of logarithms of densities. This is useful in that for some loss functions, such as hinge loss, deriving corresponding densities is complicated or not plausible\deletedmy{EiC3}{(see e.g. {franc2011support} in which a maximum-likelihood-based interpretation for SVM is presented)}. In this paper, due to the space limitation, we do not discuss the generalized SFP with settings other than what is used in SFP.

\section{Experiments} \label{sec: experiments}
In this section, we evaluate the performance of the proposed method for classification using synthetic and real-world datasets. We start the experiments in Section \ref{sec: 2d data} by testing \replacedmy{PA}{the SFP algorithm}{the SFP} on two-dimensional synthetic datasets. In Section \ref{sec: real data}, after introducing the real UCI datasets and their main characteristics, we discuss the metric of evaluation, methods with which we compare \replacedmy{PA}{SFP}{the SFP}, and the procedure of hyperparameters tuning. Results are then reported, and the accuracy of the proposed method is examined. Finally, in Section \ref{sec: high-dimensional data}, we evaluate the performance of SFP in high-dimensional scenarios.

\subsection{Two-dimensional Data Examples} \label{sec: 2d data}
Using three simple synthetic datasets, spiral \added{P3.19}{(of size 312)}, tow-circle \added{P3.19}{(of size 1000)}, and XOR \added{P3.19}{(of size 1000)}, we illustrate the potential benefits of SFP in some scenarios as compared with random forest (RF) in Figure \ref{fig: two-dimensional-datasets}. In the training procedures, key hyperparameters were tuned by grid search such that 5-fold cross-validation accuracy was maximized. In all simulations provided in this paper, we used logloss as a loss function required for SFP. Although both algorithms yield nearly $ 100 \% $ accuracy on these datasets, the \replacedmy{PA}{resulting}{resulted} decision regions are strikingly different from each other.

In general, any smooth piece of the decision boundary generated by RF is a half-plane perpendicular to one of the axes. As observed in Figures \ref{fig: spiral-RF}, \ref{fig: twocircle-RF}, and \ref{fig: xor-RF}, decision boundaries come as expected with many sharp turns. This property of RF leads to unnatural models in some situations, and this major limitation cannot be overcome even by increasing the number of trees to grow. As observed, in the spiral dataset, an RF model with $ 100 $ trees leads to piecewise linear boundaries, and in the two-circle dataset it leads to a square-shape boundary rather than a circle-shape one. In contrast, the decision boundaries obtained by SFP are much smoother. \added{P3.4}{Furthermore, the example of XOR dataset in Figure \ref{fig: xor-SFP} indicates the ability of SFP to discriminate between different subgroups of a class. This is expected because the surrogate function in SFP formulation somehow reflects the homogeneity of labels within clusters, resulting in assignment of roughly the same label prototypes to subgroups of a class.} 

\begin{figure}[!t]
	\centering
	\begin{subfigure}{.32\textwidth}
		\centering
		\includegraphics[width=\linewidth]{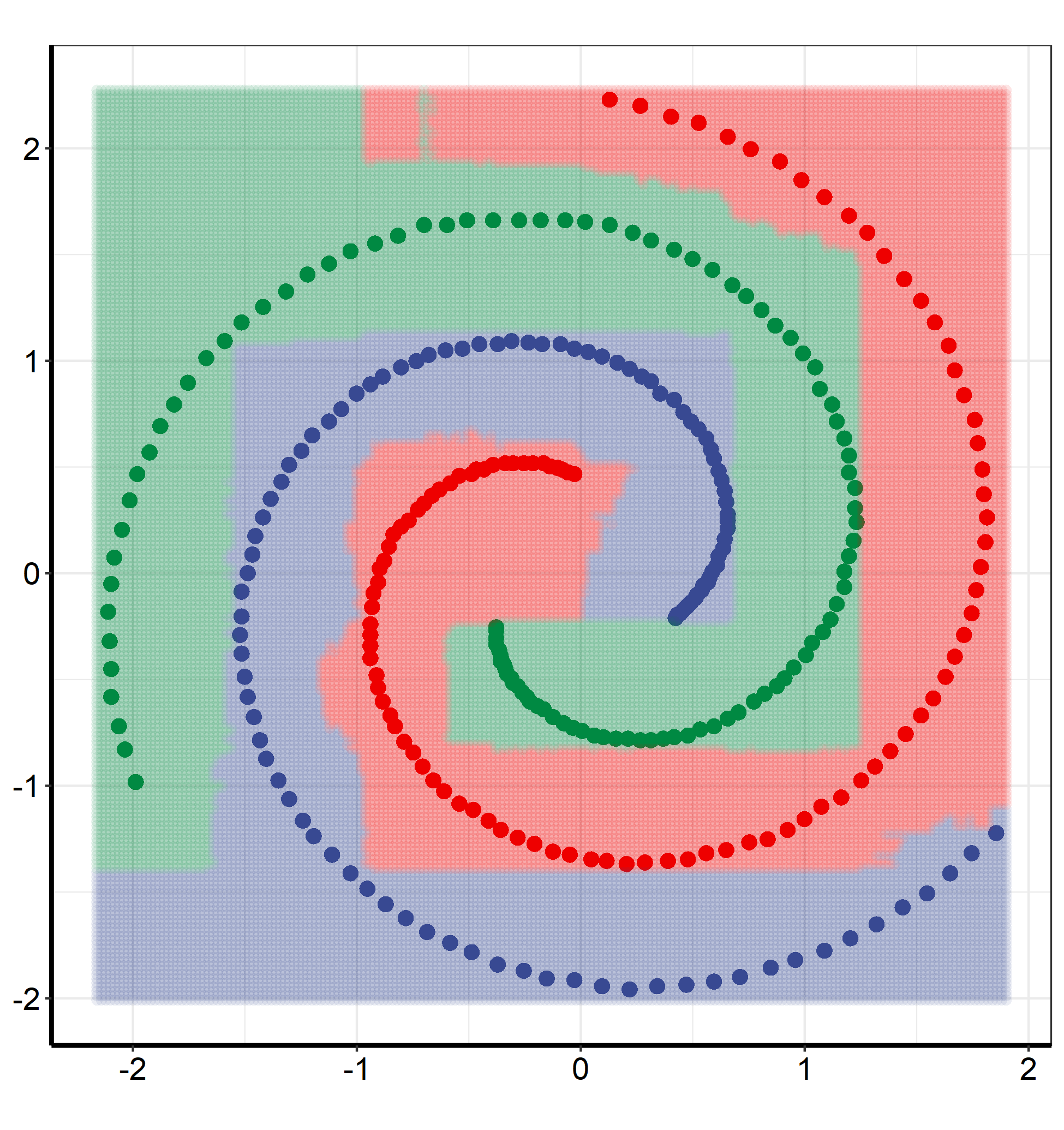}
		\caption{RF on spiral dataset}
		\label{fig: spiral-RF}
	\end{subfigure}
	\begin{subfigure}{.32\textwidth}
		\centering
		\includegraphics[width=\linewidth]{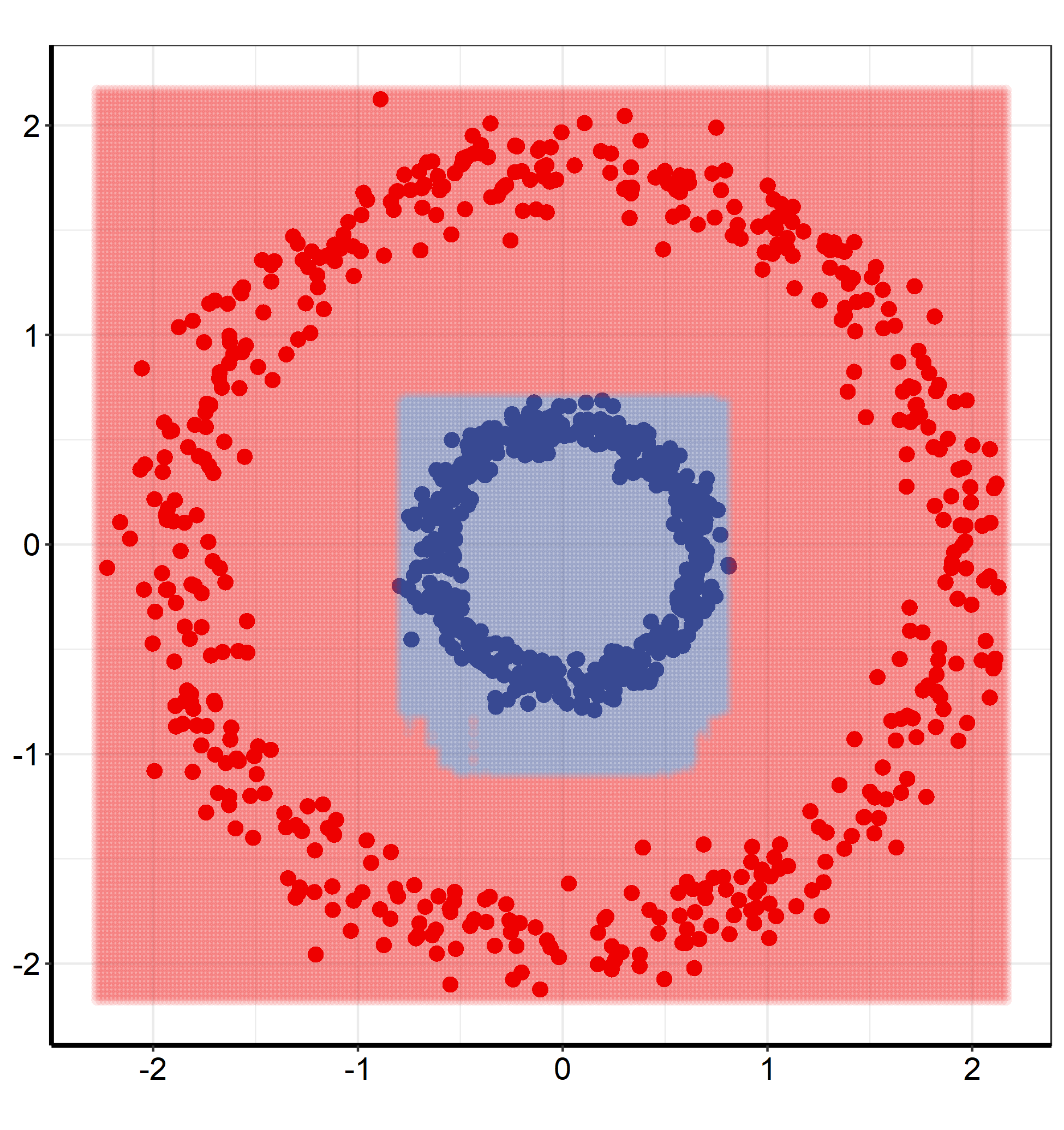}
		\caption{RF on two-circle dataset}
		\label{fig: twocircle-RF}
	\end{subfigure}
	\begin{subfigure}{.32\textwidth}
		\centering
		\includegraphics[width=\linewidth]{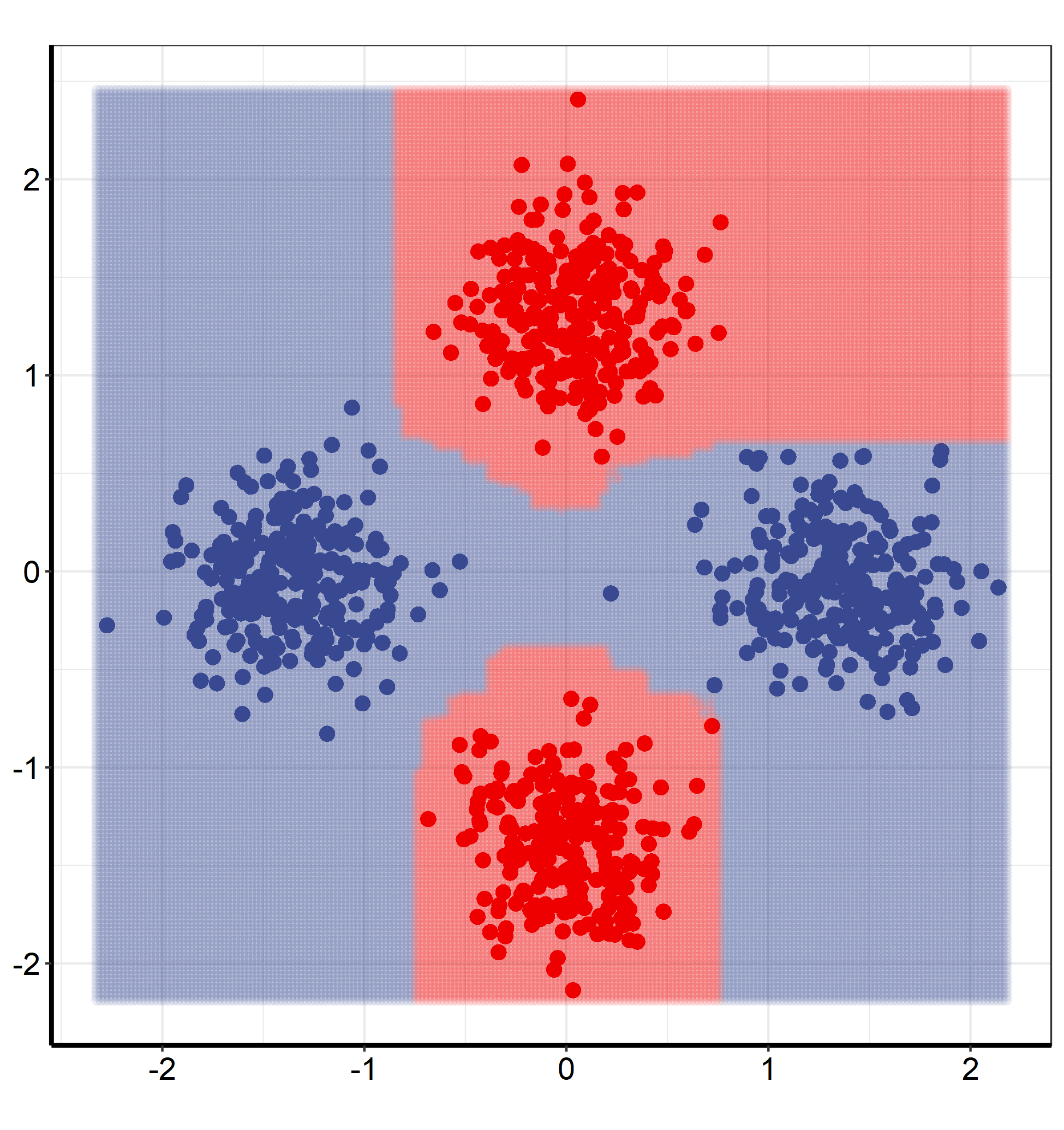}
		\caption{RF on XOR dataset}
		\label{fig: xor-RF}
	\end{subfigure}
	\begin{subfigure}{.32\textwidth}
		\centering
		\includegraphics[width=\linewidth]{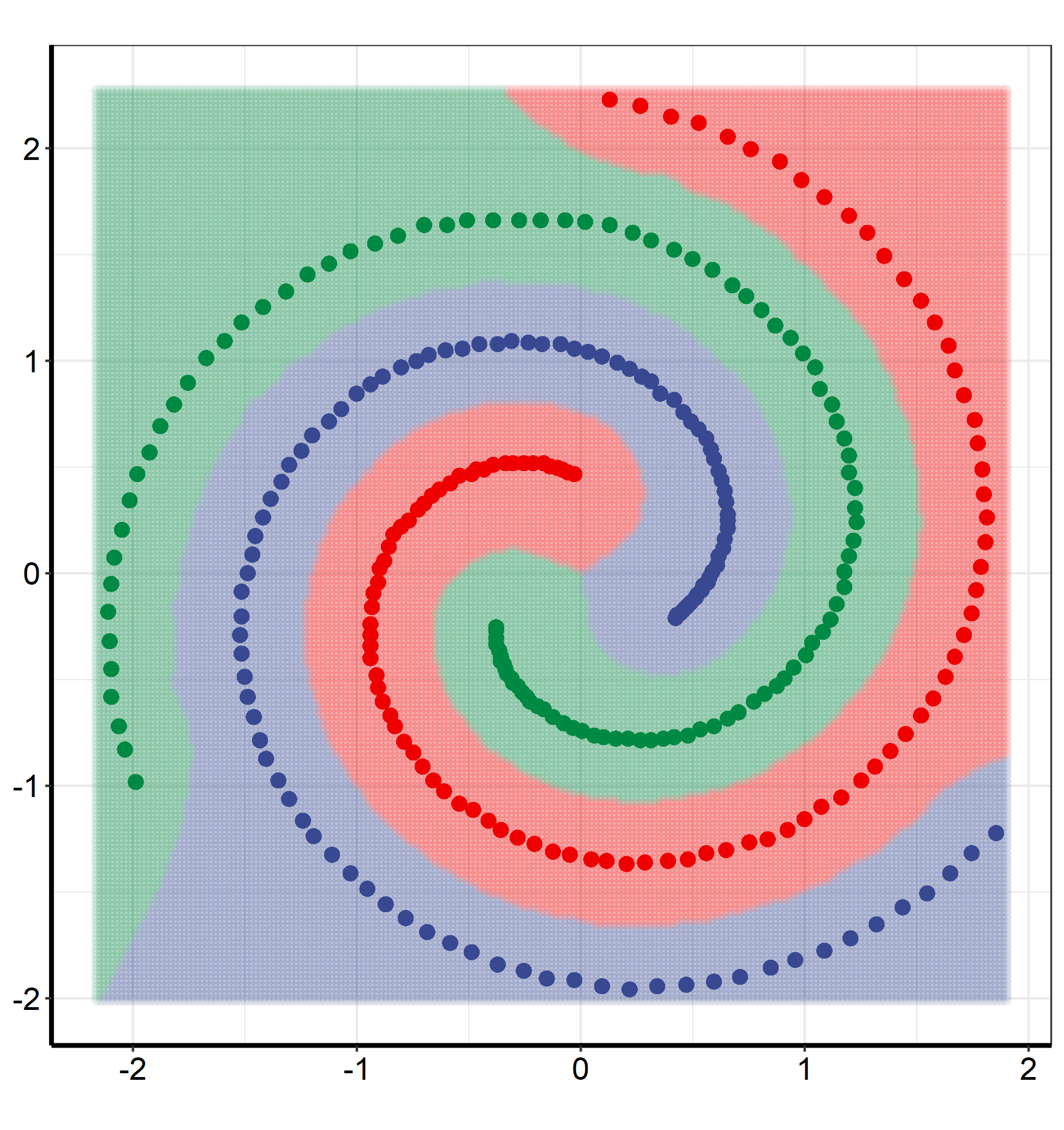}
		\caption{SFP on spiral dataset} 
		\label{fig: spiral-SFP}
	\end{subfigure}
	\begin{subfigure}{.32\textwidth}
		\centering
		\includegraphics[width=\linewidth]{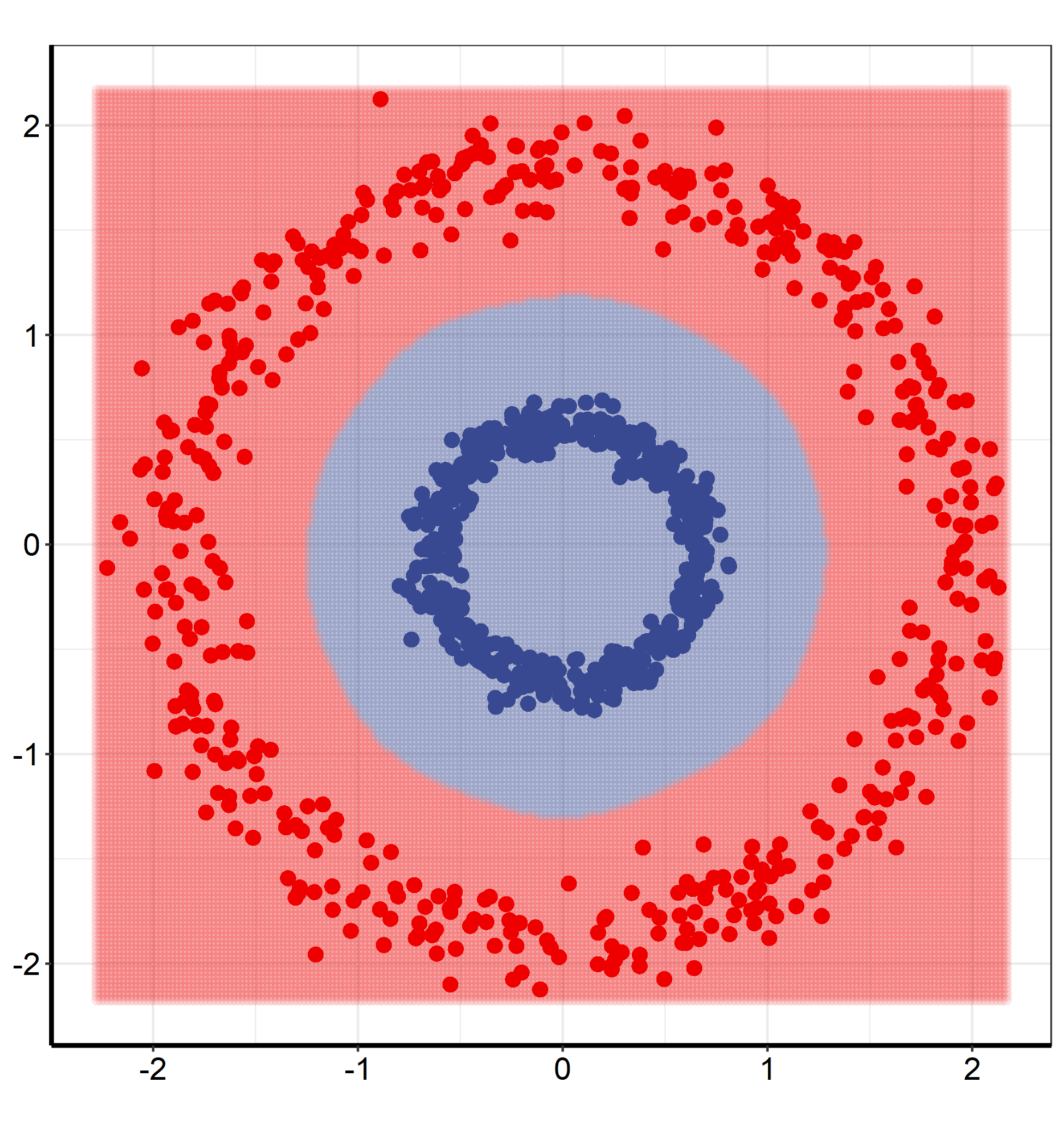}
		\caption{SFP on two-circle dataset} 
		\label{fig: twocircle-SFP}
	\end{subfigure}
	\begin{subfigure}{.32\textwidth}
		\centering
		\includegraphics[width=\linewidth]{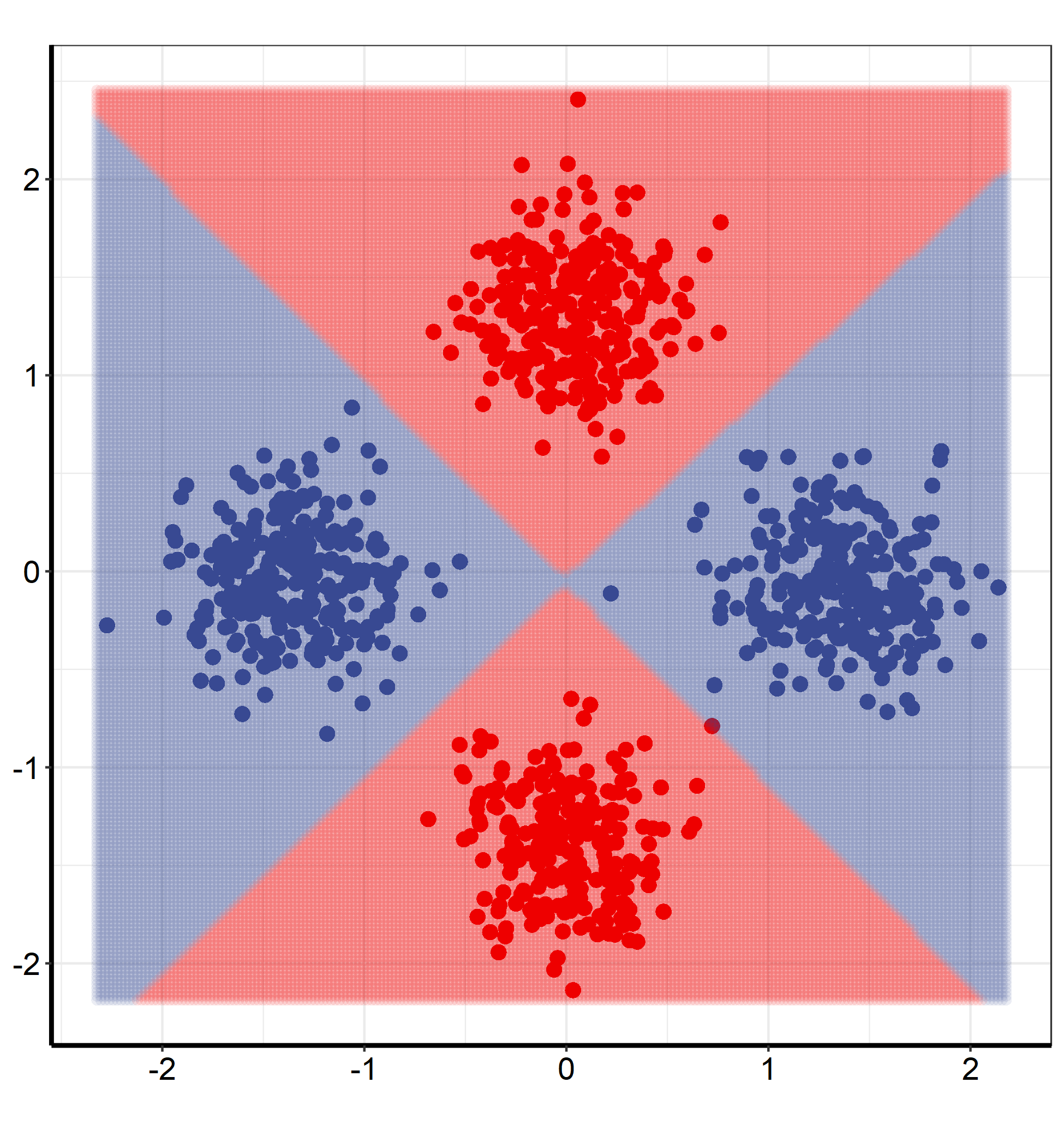}
		\caption{SFP on XOR dataset} 
		\label{fig: xor-SFP}
	\end{subfigure}
	\caption{Decision regions on two-dimensional data. The top row shows decision regions \replacedmy{PA}{resulting from}{resulted from} RF on spiral (left), two-circle (middle), and XOR (right) datasets; and the bottom row shows corresponding decision regions \replacedmy{PA}{resulting from}{resulted from} SFP.}
	\label{fig: two-dimensional-datasets}
\end{figure}

\added{P3.4}{
	We have conducted another simulation to further examine the ability of SFP to discover possible subgroups. We generated 500 random samples from the three-component mixture model with the following distribution:
	\begin{equation*}
		p(\mathbf{x}) = 0.25\ p(\mathbf{x} | y=1) + 0.25\ p(\mathbf{x} | y=2) + 0.5\ p(\mathbf{x} | y=3)  
	\end{equation*}
	where
	\begin{align}
		\label{eq: subspace-data-distribution}
		& p(\mathbf{x} | y=1) = \mathcal{N}(\mathbf{x} | [0\ 0]^T, \text{diag}(15,0.05)), \nonumber \\
		& p(\mathbf{x} | y=2) = \mathcal{N}(\mathbf{x} | [-12\ 0]^T, \mathbf{I}), \nonumber\\
		& p(\mathbf{x} | y=3) = \frac{2}{3} \mathcal{N}(\mathbf{x} | [0\ 8]^T, 4 \mathbf{I}) + \frac{1}{3} \mathcal{N}(\mathbf{x} | [0\ -4]^T, \mathbf{I}),
	\end{align}
	where $ \text{diag}(a_1, \dots, a_m) $ denotes a diagonal matrix whose diagonal entries starting in the upper left corner are $ a_1, \dots, a_n $, and $ \mathbf{I} $ denotes the 2-by-2 identity matrix. Note that the first class is roughly associated with a subspace and that the third class itself is made up of two subgroups. SFP was applied to the simulated data for different values of $ k $. Figure \ref{fig: subgroup-discovery} illustrates the data along with the centroids associated with $ k = 3 $, $ 4 $, $ 6 $, and $ 7 $. We can see that SFP with $ k = 3 $ results in only one centroid representing the third class, clearly failing to discover its subgroups. However, in the case of $ k = 4 $, each class or subgroup is correctly represented by at least one centroid (see Figure \ref{fig: subgroup-discovery-4centers}), and the resulting centroids are very close approximations to the centers of clusters given in Equation \eqref{eq: subspace-data-distribution}. Furthermore, as expected the resulting weight vectors associated with clusters from circular normal distributions are approximately $ (0.5, 0.5) $, while that of the subspace (cluster 1) is $ \mathbf{w}_1 = (0.1, 0.9) $, which is reasonable since it is distributed mostly along the first coordinate. Interestingly, as $ k $ increases, SFP tries to split larger classes (in terms of size and dispersion) into smaller groups by dedicating more centroid to them. This explains why in Figure \ref{fig: subgroup-discovery-4centers} the class 1 and 3 are represented with more centroids.	
	\begin{figure}[!t]
		\centering
		\begin{subfigure}{.49\textwidth}
			\centering
			\includegraphics[width=\linewidth]{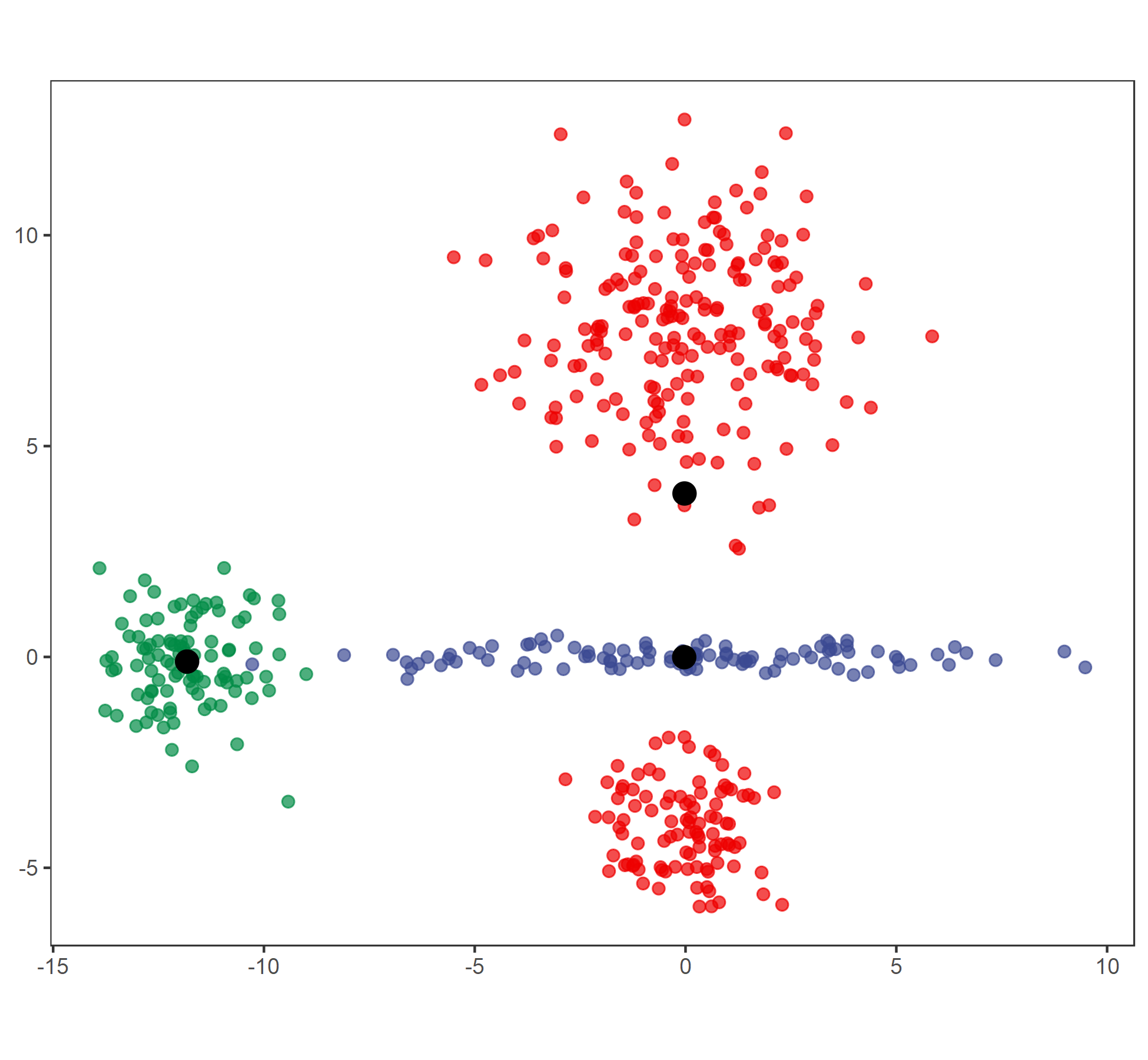}
			\caption{$ k = 3 $}
			\label{fig: subgroup-discovery-3centers}
		\end{subfigure}
		\begin{subfigure}{.49\textwidth}
			\centering
			\includegraphics[width=\linewidth]{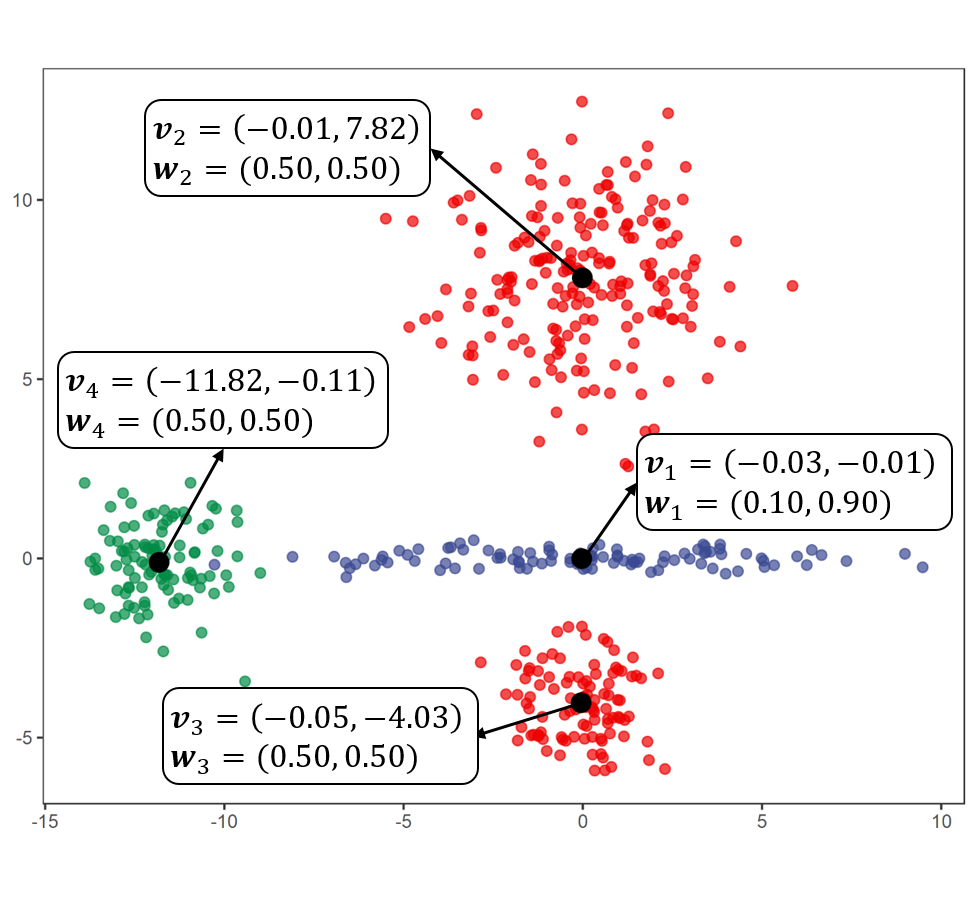}
			\caption{$ k = 4 $}
			\label{fig: subgroup-discovery-4centers}
		\end{subfigure}
		\begin{subfigure}{.49\textwidth}
			\centering
			\includegraphics[width=\linewidth]{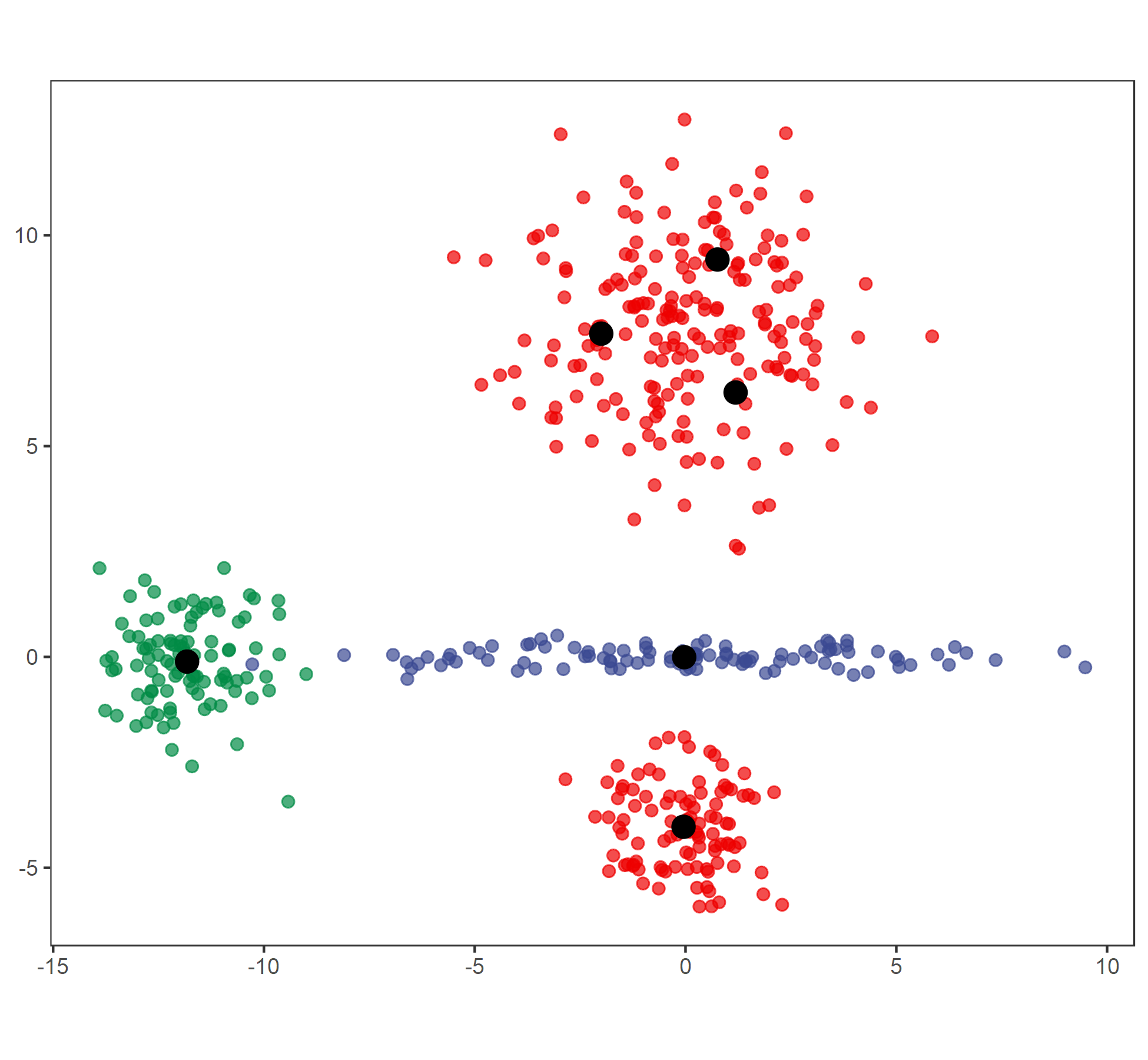}
			\caption{$ k = 6 $}
			\label{fig: subgroup-discovery-6centers}
		\end{subfigure}
		\begin{subfigure}{.49\textwidth}
			\centering
			\includegraphics[width=\linewidth]{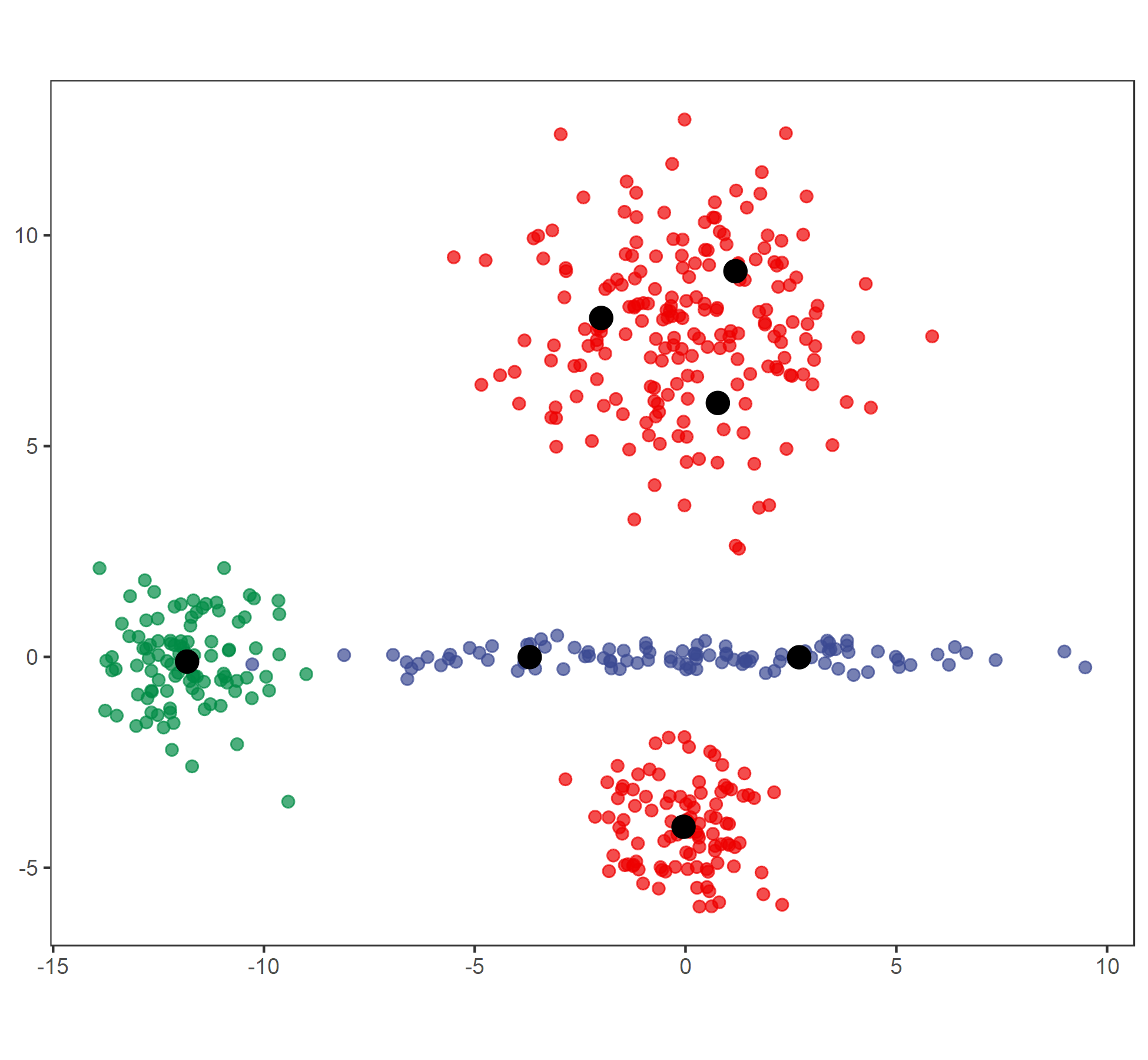}
			\caption{$ k = 7 $}
			\label{fig: subgroup-discovery-7centers}
		\end{subfigure}
		\caption{SFP applied to the data coming from three classes in $ \mathbb{R}^2 $. The class shown in red comprises two Gaussian subgroups. In all four cases, we have set $ \alpha = 1 $, $ \gamma = 0.05 $, and $ \lambda = 25 $, and each panel illustrates the scatter plot of the data and centroids (black points) resulting from SFP with a different value of $ k $.
		}
		\label{fig: subgroup-discovery}
	\end{figure}	
}
This experiment demonstrates the ability of SFP to classify nonlinear and diverse patterns through a more natural model.

\subsection{Experiments on Real Data} \label{sec: real data}
We tested \replacedmy{PA}{SFP}{the SFP} on a wide range of benchmark datasets taken from the UCI machine learning repository \citep{Dua:2017}. The characteristics of the datasets are summarized in Table \ref{tab: UCI datasets}. This collection contains data with various types, e.g., two-class, multi-class, balanced, unbalanced, categorical, continuous, and mixed. 

For each dataset, first if there existed any missing data in continuous and nominal features, they were simply imputed by median and mode, respectively, and then all the nominal features were \replacedmy{PA}{transformed}{converted} into numeric ones using simple dummy coding. \replaced{P3.21}{Afterward, we normalized all features (both continuous and transformed ones) with their means and standard deviations.}{Afterward, we normalized each feature with its mean and standard deviation.} \replacedmy{PA}{This is useful for k-means-like algorithms with Euclidean distance and makes the learning process often converge faster and become more likely to avoid local minima.}{This is useful for k-means-like algorithms in that the learning process often becomes very likely to converge faster and to avoid local minima.} We performed 5-fold cross-validation on each dataset to estimate test accuracy and repeated it $ 20 $ times to achieve reliable results. \replacedmy{PA}{SFP}{The SFP} was compared with the representative classifiers: KNN, SVM with linear kernel (SVM-linear), SVM with RBF kernel (SVM-RBF), random forest (RF), \added{P2.2}{and extremely randomized trees (ERT)}. The \texttt{class} \citep{venables2002modern}, \texttt{e1071} \citep{dimitriadou2008misc}, \texttt{liquidSVM} \citep{steinwart2017liquidSVM}, \texttt{randomForest} \citep{liaw2002classification}, \added{P2.2}{and \texttt{extraTrees} \cite{simm2014}} open-source R packages have been used for KNN, SVM-linear, SVM-RBF, RF, \added{P2.2}{and ERT}, respectively. Package \texttt{e1071} provides an interface to the popular and efficient library, LIBSVM \citep{chang2011libsvm}, for SVM. Although it supports most types of kernels, we found the new package \texttt{liquidSVM} much faster and even slightly more accurate \replacedmy{PA}{in}{for} the case of RBF kernel. All hyperparameters were tuned via grid search guided by nested 5-fold cross-validation. 

\begin{table}[!t] 
	\centering
	\caption{Characteristics of the used UCI benchmark datasets for classification.}
	
	\def\arraystretch{.5}
	\begin{tabu}{X[.3,l] X[1.5,l] X[1.1,c] X[c] X[c] X[c]}
		\tabucline[1pt]-
		No. & dataset & feature type & size & $\#$ features & $\#$ classes \\ 
		\tabucline[1pt]-\tabucline[1pt]-
		
		1 & abalone & mixed & 4177 & 8 & 28 \\
		
		2 & Australian & mixed & 690 & 14 & 2 \\ 
		
		3 & breast-cancer & continuous & 699 & 9 & 2 \\
		
		4 & cardiotocography & continuous & 2126 & 35 & 10  \\
		
		5 & Cleveland & mixed & 303 & 13 & 5 \\
		
		6 & diabetes & continuous & 768 & 8 & 2  \\
		
		7 & ecoli & continuous & 336  & 7 & 8 \\
		
		8 & hepatitis & mixed & 155 & 19 & 2 \\
		
		9 & ionosphere & continuous & 351 & 33 & 2 \\
		
		10 & iris & continuous  & 150 & 4 & 3 \\
		
		11 & lymphography & mixed & 148 & 18 & 4 \\
		
		12 & parkinsons & continuous & 195 & 22 & 2 \\
		
		13 & sonar & continuous & 208 & 60 & 2   \\ 
		
		14 & soybean & categorical  & 683 & 35 & 18  \\
		
		15 & SPECT  & binary & 267  & 22 & 2 \\
		
		16 & tic-tac-toe & categorical & 958 & 9 & 2  \\ 
		
		17 & transfusion & continuous & 748 & 4 & 2 \\	
		
		18 & vowel & mixed & 990 & 12 & 11  \\
		
		19 & wine & continuous & 178 & 13 & 3 \\
		
		20 & zoo & mixed & 101 & 16 & 7  \\
		
		\tabucline[1pt]-  
	\end{tabu} 
	\label{tab: UCI datasets}
\end{table} 

The SFP algorithm has four hyperparameters $ k \geq 2 $, $ \alpha \geq 0 $, $ \gamma > 0 $, and $ \lambda > 0 $ which can be tuned by any hyperparameter optimization method. However, we simply performed a grid search on $ k $ and new \replaced{PA}{hyperparameters}{parameters} $ \alpha' \in (0,1] $, $ \gamma' \in (0,1) $, and $ \lambda' \in (0,1) $, where $ \alpha = \frac{1-\alpha'}{\alpha'} $, $ \gamma = \frac{1-\gamma'}{\gamma'} $, and $ \lambda = \frac{1-\lambda'}{\lambda'} $. \added{P3.22}{The optimal number of clusters is expected not to be smaller than the number of classes and not to be greater than the size of training data, i.e., $ k \in \{M, M+1, ..., n\} $.} As a result, a reasonable search space can be $ k \in \{ M+i (\frac{n'-M}{4}) \mid i=0, \dots, 4 \} $, where $ n' $ is the size of training data in the cross-validation loop, and $ \alpha', \gamma', \lambda' \in \{ 0.05+0.1i \mid i=0, \dots, 9 \} $. This search space results in a large number of SFP training procedures required by internal cross-validation, leading to intensive computations. 

However, after empirical analysis of the impact of each hyperparameter, we found that a much smaller but effective region to search can be achieved. We demonstrate this by applying \replacedmy{PA}{SFP}{the SFP} on zoo dataset and plotting 5-fold cross-validation accuracy versus each hyperparameter in Figure \ref{fig: SFP hyperparameters}. Firstly, we observe that for a typical dataset whose size is much greater than its dimension, like zoo and others in this UCI collection, as $ k $ becomes larger and larger, the accuracy often rises steadily and then starts to decline or remain stable. Thus, we can start with $ k = M $ and increase it by a specific value each time until the accuracy stops increasing. Figure \ref{fig: accuracy-k-zoo} illustrates this for the zoo dataset. It is also noticeable that overall for the greater $ \gamma' $, the greater value of $ k $ is required to achieve higher accuracy. Furthermore, Figure \ref{fig: accuracy-gamma-zoo} suggests that the optimal values of $ \gamma' $ are very likely to occur in the interval $ (0.5, 1) $. Interestingly, as shown in \addedmy{PA}{Figure} \ref{fig: accuracy-alpha-zoo}, assuming a fixed $ \gamma' $, the optimal values of $ \alpha' $ frequently occurs in the interval $ (0,\gamma') $. In fact, for a rather large range of $ \alpha' $ within this interval, the SFP algorithm often becomes relatively stable and less sensitive to $ \alpha $. Finally, the effect of $ \lambda' $ on the performance is shown in Figure \ref{fig: accuracy-lambda-zoo}. As we discussed in Section \ref{sec: the objective function of SFP}, $ \lambda $ is a regularization parameter that controls features contribution based on their importance. Therefore, the optimal $ \lambda' $ depends mostly on the data dimension (more precisely $ p/log(n) $), and larger values of $ \lambda' $ (small $ \lambda $s) are expected for high-dimensional datasets. All in all, a practical search space for hyperparameters can be reduced to $ k \in \{ M+i (\frac{n'-M}{4}) \mid i=0, \dots, 4 \} $, $ \gamma' \in \{ 0.55+0.1i \mid i=0, \dots, 4 \} $, $ \alpha' = \gamma'/2 $, and $ \lambda' \in \{ 0.05+0.1i \mid i=0, \dots, 9 \} $, resulting in the number of cross-validation evaluations dropping from $ 5 \times 10^3 = 5000 $ to $ 5 \times 5 \times 10 = 250 $.  

After evaluating the accuracies using cross-validation at a discrete grid of hyperparameters, there are several strategies to select the best hyperparameters: (1) the hyperparameters associated with the highest accuracy are selected; (2) for each hyperparameter, the average of its values associated with the $ q\% $ highest accuracies are selected, where $ q $ typically is smaller than $ 20 $; and (3) before employing the first strategy, the accuracies are replaced by a smoothed version obtained by a nonparametric regression method, such as LOESS \citep{cleveland1988locally}. We observed that when estimated accuracies are noisy, e.g., because of cross-validation repeats being insufficient or data being very high-dimensional, the third strategy is much more efficient; hence, in our simulations, we employed the third strategy.

\begin{figure}[!t]
	\centering
	\begin{subfigure}{.49\textwidth}
		\centering
		\includegraphics[width=\linewidth]{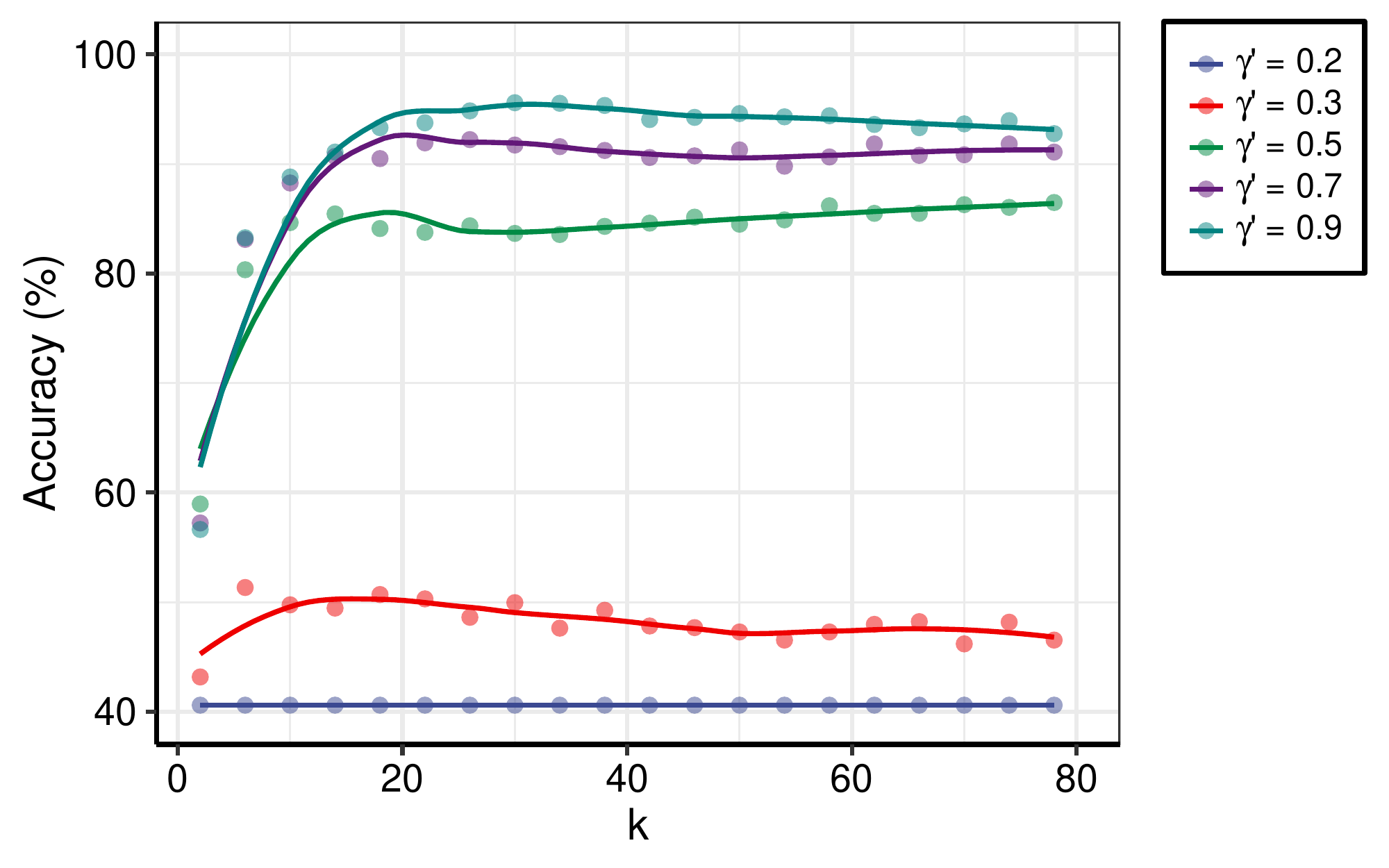}
		\caption{$ \alpha' = 0.4, \lambda' = 0.05 $}
		\label{fig: accuracy-k-zoo}
	\end{subfigure}
	\begin{subfigure}{.49\textwidth}
		\centering
		\includegraphics[width=\linewidth]{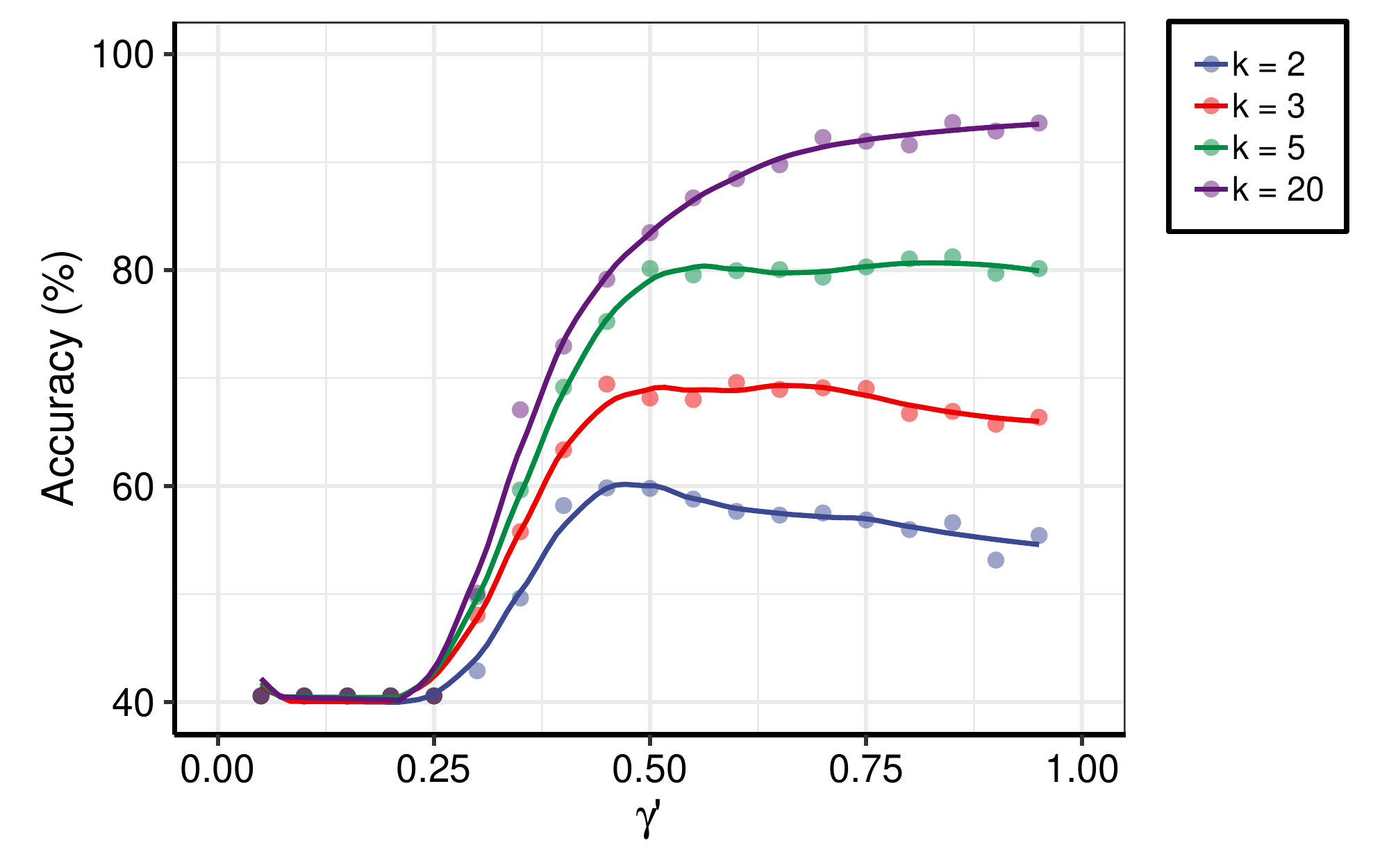}
		\caption{$ \alpha' = 0.4, \lambda' = 0.05 $}
		\label{fig: accuracy-gamma-zoo}
	\end{subfigure}
	\begin{subfigure}{.49\textwidth}
		\centering
		\includegraphics[width=\linewidth]{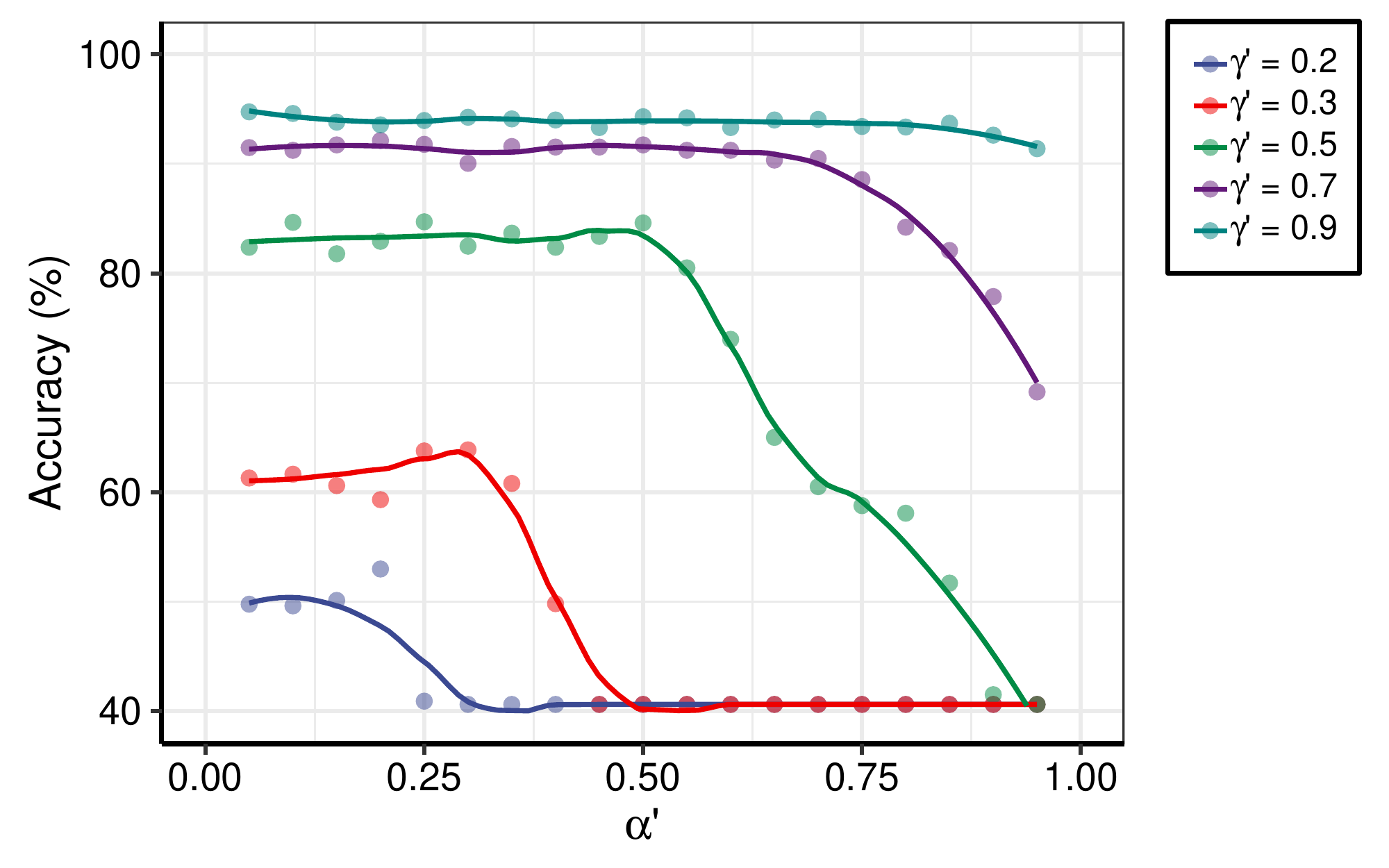}
		\caption{$ k = 20, \lambda' = 0.05 $}
		\label{fig: accuracy-alpha-zoo}
	\end{subfigure}
	\begin{subfigure}{.49\textwidth}
		\centering
		\includegraphics[width=\linewidth]{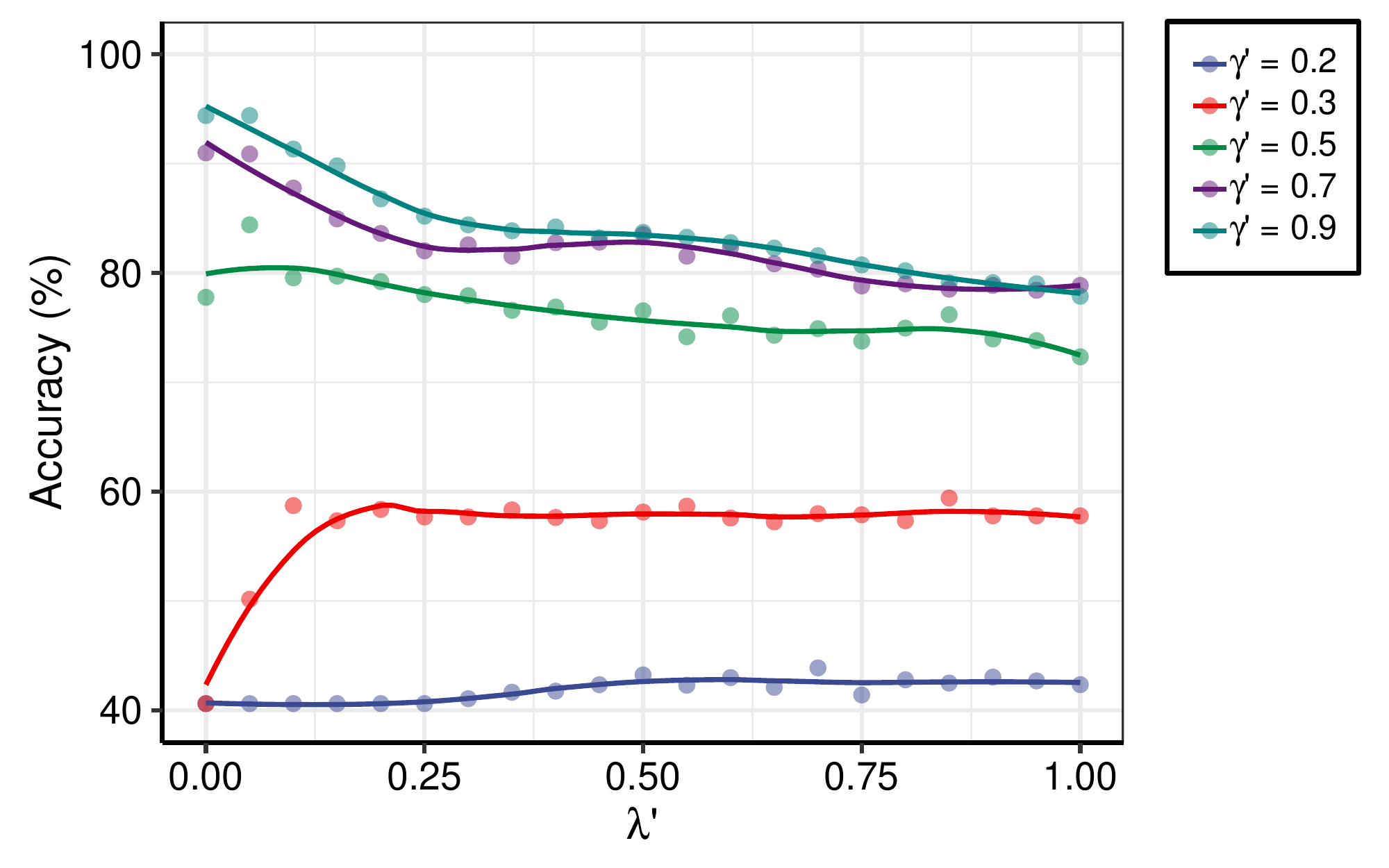}
		\caption{$ k = 20, \alpha' = 0.4 $}
		\label{fig: accuracy-lambda-zoo}
	\end{subfigure}
	\caption{The 5-fold cross-validation accuracy of SFP on zoo dataset against hyperparameters. The panels (a), (b), and (c) show plots of accuracy versus $ k $, $ \alpha' $, and $ \lambda' $, respectively, for different values of $ \gamma' $. In panel (d), accuracy is plotted against $ \gamma' $ for different values of $ k $. In all plots, the smooth curves are obtained via LOESS.}
	\label{fig: SFP hyperparameters}
\end{figure}

Convergence speed is investigated in Figure \ref{fig: SFP iterations}, where the objective functions of SFP on zoo and breast-cancer datasets over successive BCD iterations are plotted. Note that BCD has converged in fewer than 15 iterations, and overall greater reductions of the objective function are seen in the early iterations. Although the speed of convergence depends on data and initialization, in practice 10 iterations are enough for most applications.

\begin{figure}[!t]
	\centering
	\begin{subfigure}{.49\textwidth}
		\centering
		\includegraphics[width=\linewidth]{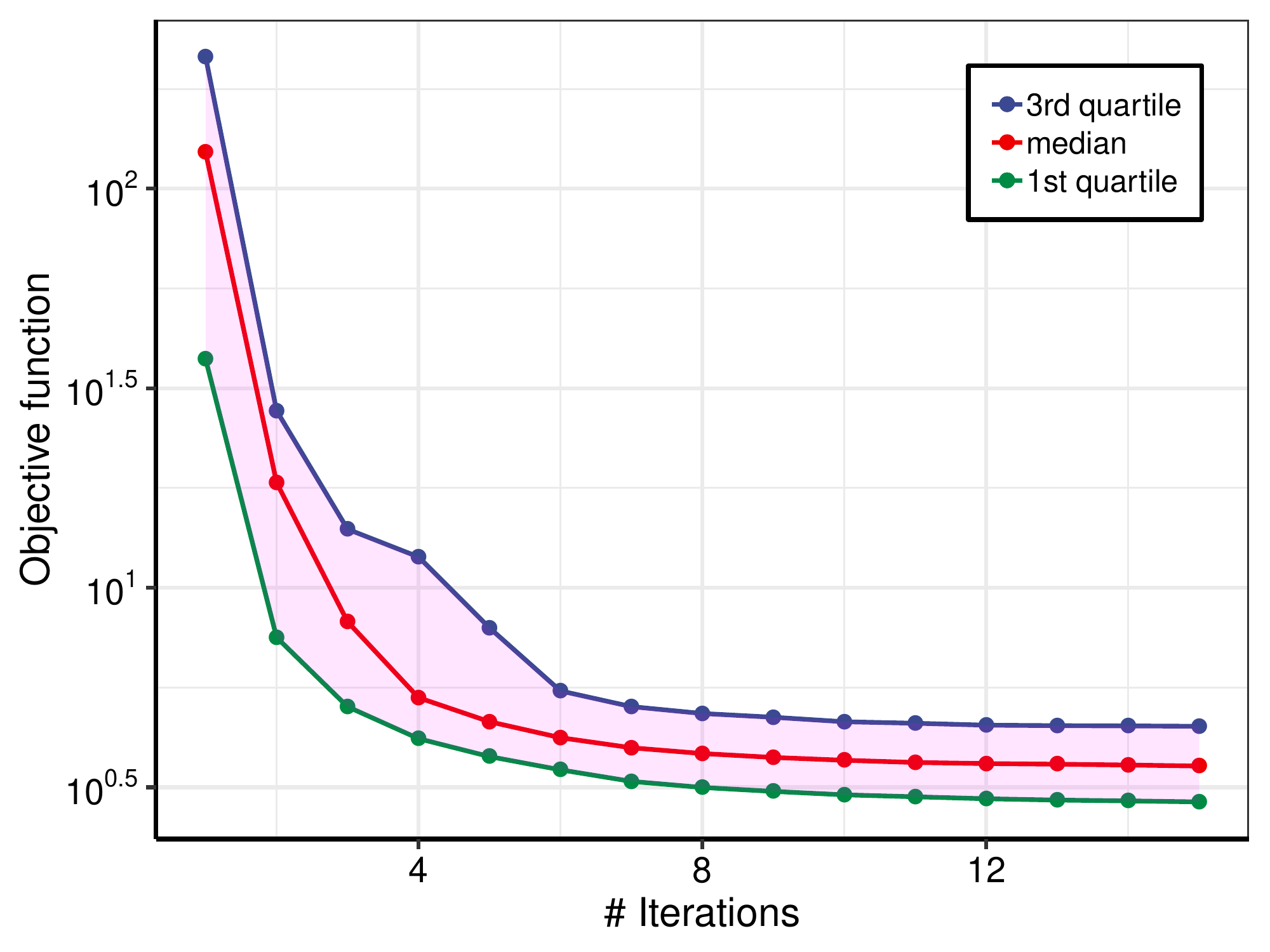}
		\caption{$ k = 20, \gamma' = 0.9, \alpha' = 0.4, \lambda' = 0.05 $}
		\label{fig: SFP-iterations-zoo}
	\end{subfigure}
	\begin{subfigure}{.49\textwidth}
		\centering
		\includegraphics[width=\linewidth]{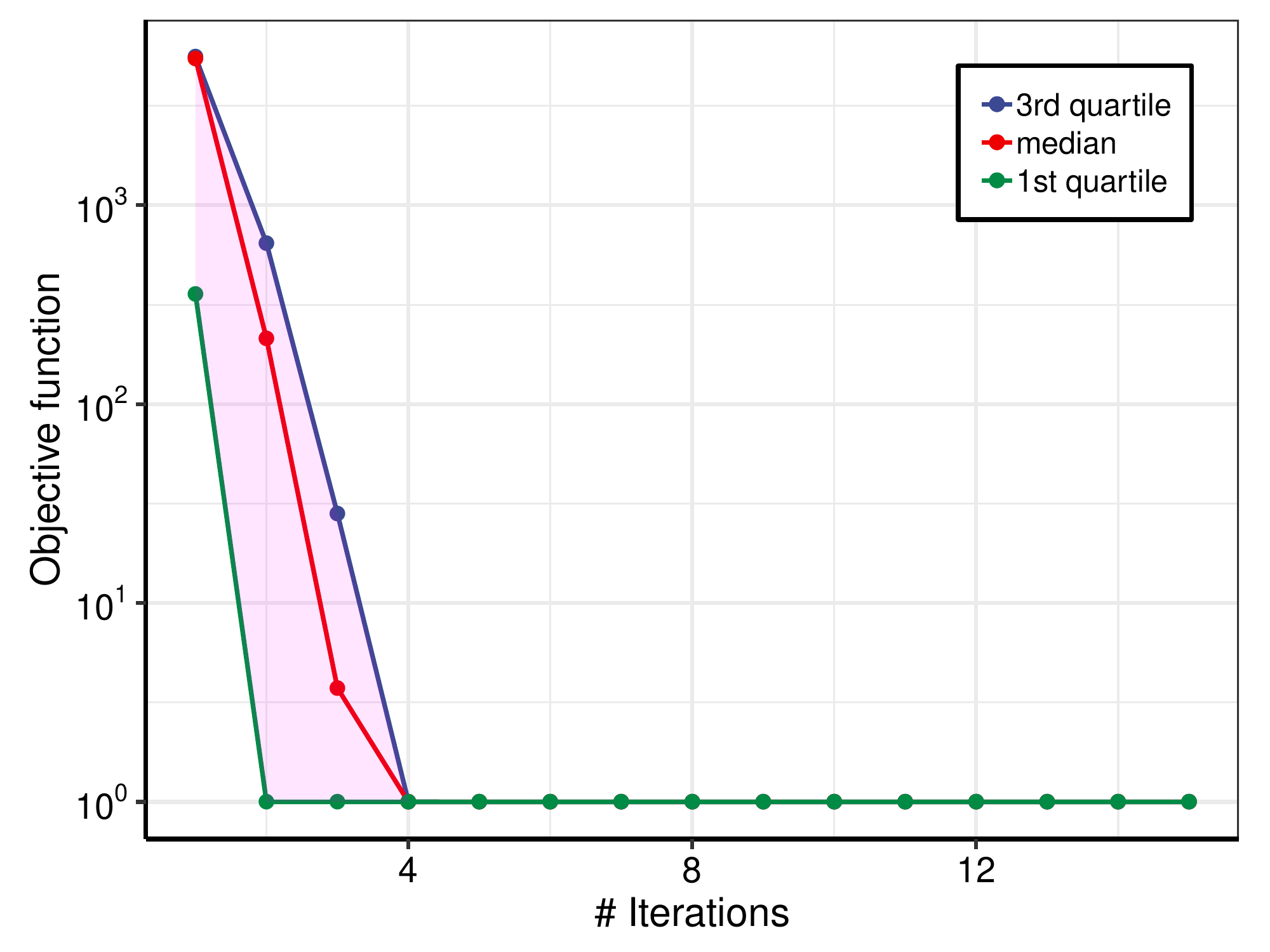}
		\caption{$ k = 2, \gamma' = 0.8, \alpha' = 0.4, \lambda' = 0.01 $}
		\label{fig: SFP-iterations-breast}
	\end{subfigure}
	\caption{Evolution of the SFP objective function on zoo (left) and breast-cancer (right) datasets, as a function of the number of iterations. At each iteration, the three quartiles \replacedmy{PA}{resulting from}{resulted from} 1000 runs of the BCD are provided. To display in a logarithmic scale, the objective function is shifted such that all values become positive.}
	\label{fig: SFP iterations}
\end{figure}

Table \ref{tab: SFP vs classifiers} shows the means and standard deviations of the accuracy rate on the UCI datasets, as well as the average accuracy of each algorithm. It can be seen that SFP and SVM-linear obtained the best and the worst average accuracy, respectively. To examine whether differences are significant, a pairwise paired t-test is performed, and the p-values are reported in Table \ref{tab: UCI results comparisons}. Considering the significance level of $ 5\% $, the performance of SFP is superior to those of KNN and SVM-linear, but is not significantly better than those of SVM-RBF, RF, \added{P2.2}{and ERT}. Overall, SFP, SVM-RBF, and RF yielded comparable results in terms of accuracy, but better results compared to KNN and SVM-linear.

\begin{table}[!t]  
	\centering
	\caption{The 5-fold CV accuracy of the comparing algorithms (mean$ \pm $ std) on the UCI benchmark datasets for classification.}
	\small
	\def\arraystretch{.5}
	\begin{tabu}{X[1.5,l] X[c] X[1.1,c] X[1.1,c] X[c] X[c] X[c]}
		\tabucline[1pt]-
		dataset & KNN & SVM-linear & SVM-RBF & RF & ERT & SFP \\ 
		\tabucline[1pt]-\tabucline[1pt]-
		
		abalone & $ 26.6 \pm 1.8 $ & $ 26.3 \pm 2.1 $ & $ 27.6 \pm 1.5 $ & $ 24.9 \pm 0.7 $ & $ 23.9 \pm 1.1 $ & $ 26.4 \pm 1.2 $ \\
		
		Australian & $ 85.7 \pm 2.8 $ & $ 85.4 \pm 3.1 $ & $ 86.0 \pm 3.0 $ & $ 86.9 \pm 2.9 $ & $ 85.9 \pm 2.2 $ & $ 85.6 \pm 2.5 $ \\ 
		
		breast-cancer & $ 96.4 \pm 1.5 $ & $ 96.7 \pm 1.4 $ & $ 96.5 \pm 1.5 $ & $ 96.8 \pm 1.3 $ & $ 96.7 \pm 1.4 $ & $ 96.5 \pm 1.6 $  \\
		
		cardiotocography & $ 98.8 \pm 0.4 $ & $ 98.8 \pm 0.5 $ & $ 99.0 \pm 0.5 $ & $ 98.8 \pm 0.5 $ & $ 99.0 \pm 0.4 $ & $ 98.9 \pm 0.4 $ \\
		
		Cleveland & $ 56.9 \pm 6.2 $ & $ 56.7 \pm 7.3 $ & $ 57.2 \pm 6.5 $ & $ 57.5 \pm 6.7 $ & $ 56.3 \pm 5.7 $ & $ 57.0 \pm 6.8 $ \\
		
		diabetes & $ 74.6 \pm 3.2 $ & $ 77.1 \pm 2.7 $ & $ 76.7 \pm 2.9 $ & $ 76.2 \pm 3.0 $ & $ 75.7 \pm 3.2 $ & $ 76.1 \pm 3.4 $ \\
		
		ecoli & $ 84.9 \pm 4.1 $ & $ 86.5 \pm 4.1 $ & $ 85.8 \pm 4.2 $ & $ 86.5 \pm 4.5 $ & $ 85.8 \pm 3.7 $ & $ 86.3 \pm 3.9 $ \\
		
		hepatitis & $ 84.5 \pm 6.1 $ & $ 83.8 \pm 5.9 $ & $ 82.5 \pm 6.1 $ & $ 85.2 \pm 6.1 $ & $ 81.9 \pm 6.2 $ & $ 83.4 \pm 6.0 $ \\
		
		ionosphere & $ 85.9 \pm 4.1 $ & $ 87.0 \pm 3.5 $ & $ 90.1 \pm 3.7 $ & $ 93.3 \pm 2.8 $ & $ 94.2 \pm 2.4 $ & $ 92.0 \pm 3.4 $ \\
		
		iris & $ 94.5 \pm 3.9 $ & $ 96.0 \pm 3.5 $ & $ 95.3 \pm 3.4 $ & $ 94.4 \pm 3.9 $ & $ 95.3 \pm 3.3 $ & $ 94.8 \pm 4.0 $ \\
		
		lymphography & $ 81.0 \pm 6.1 $ & $ 85.2 \pm 6.1 $ & $ 82.3 \pm 6.2 $ & $ 83.1 \pm 7.0 $ & $ 83.5 \pm 6.6 $ & $ 82.4 \pm 6.6 $ \\
		
		parkinsons & $ 93.4 \pm 4.6 $ & $ 87.0 \pm 5.0 $ & $ 94.3 \pm 4.3 $ & $ 90.2 \pm 4.8 $ & $ 91.8 \pm 4.2 $ & $ 93.8 \pm 3.9 $ \\
		
		sonar & $ 85.3 \pm 5.7 $ & $ 76.5 \pm 5.9 $ & $ 86.9 \pm 6.1 $ & $ 82.6 \pm 6.0 $ & $ 86.2 \pm 4.7 $ & $ 85.2 \pm 5.7 $ \\ 
		
		soybean & $ 91.5 \pm 2.3 $ & $ 92.7 \pm 1.9 $ & $ 91.7 \pm 2.4 $ & $ 93.7 \pm 1.7 $ & $ 93.3 \pm 2.1 $ & $ 93.0 \pm 2.2 $ \\
		
		SPECT & $ 79.4 \pm 5.3 $ & $ 80.8 \pm 4.9 $ & $ 82.6 \pm 4.9 $ & $ 81.7 \pm 4.6 $ & $ 80.9 \pm 4.4 $ & $ 82.1 \pm 5.5 $ \\
		
		tic-tac-toe & $ 100.0 \pm 0.0 $ & $ 98.3 \pm 0.7 $ & $ 100.0 \pm 0.0 $ & $ 99.0 \pm 0.6 $ & $ 99.2 \pm 0.6 $ & $ 99.9 \pm 0.2 $ \\ 
		
		transfusion & $ 78.6 \pm 3.6 $ & $ 76.1 \pm 3.3 $ & $ 78.0 \pm 3.6 $ & $ 75.4 \pm 3.4 $ & $ 72.9 \pm 3.0 $ & $ 77.6 \pm 3.6 $ \\		
		
		vowel & $ 98.5 \pm 1.0 $ & $ 90.4 \pm 2.3 $ & $ 98.6 \pm 1.0 $ & $ 97.1 \pm 1.4 $ & $ 98.4 \pm 0.9 $ & $ 98.5 \pm 1.0 $ \\
		
		wine & $ 95.4 \pm 3.8 $ & $ 97.4 \pm 2.6 $ & $ 96.4 \pm 2.9 $ & $ 97.9 \pm 2.1 $ & $ 98.5 \pm 1.9 $ & $ 97.5 \pm 2.5 $ \\
		
		zoo & $ 94.3 \pm 5.0 $ & $ 94.4 \pm 4.6 $ & $ 93.3 \pm 5.1 $ & $ 94.8 \pm 4.4 $ & $ 96.1 \pm 4.1 $ & $ 95.5 \pm 4.4 $ \\
		
		\tabucline[1pt]-  
		\textbf{average} & 84.3 & 83.7 & 85.0 & 84.8 & 84.8 & 85.1 \\
		\tabucline[1pt]-
		
	\end{tabu}
	\label{tab: SFP vs classifiers} 
\end{table} 

\begin{table}[!h] 
	\centering
	\caption{Pairwise comparisons using paired two-tailed t-test with unequal variances. For each pair, the method with greater average accuracy is listed first. P-values smaller than 0.05 are presented in boldface.}
	
	\def\arraystretch{.5}
	\begin{tabu}{X[1,l] X[c]}
		\tabucline[1pt]-
		The pair of methods & p-value \\
		\tabucline[1pt]-\tabucline[1pt]-
		
		KNN vs. SVM-linear & 0.414 \\
		
		ERT vs. SVM-linear & 0.174 \\
		ERT vs. KNN & 0.441 \\	
		
		RF vs. SVM-linear & 0.058 \\
		RF vs. KNN & 0.357 \\
		RF vs. ERT  & 0.930 \\
		
		SVM-RBF vs. SVM-linear  &  0.089 \\
		SVM-RBF vs. KNN & \textbf{0.026} \\
		SVM-RBF vs. RF & 0.618 \\
		SVM-RBF vs. ERT & 0.579 \\
		
		SFP vs. SVM-linear & \textbf{0.049} \\
		SFP vs. KNN & \textbf{0.029} \\
		SFP vs. ERT & 0.309 \\
		SFP vs. RF &  0.307 \\
		SFP vs. SVM-RBF & 0.684 \\
		
		\tabucline[1pt]-  
	\end{tabu} 
	\label{tab: UCI results comparisons}
\end{table}

\subsection{Experiments on High-dimensional Data} \label{sec: high-dimensional data}
To asses the effectiveness of the proposed classifier in high-dimensional scenarios ($ p \gg n $), we used four gene expression datasets. These datasets typically contain the expression levels of thousands of genes across a small number of samples ($ < $ 200), giving information about tumor diagnosis or helping to identify the cancer type (or subtype) and the right therapy. Their specifications are briefly outlined in what follows.
\begin{itemize}
	\item Colon \citep{alon1999broad}: This dataset contains 62 samples, among which 40 are from tumors and the remaining are normal. The number of genes (features) is 2000.
	
	\item Leukemia \citep{golub1999molecular}: This dataset contains expression levels of 7129 genes from 72 acute leukemia patients, labeled with two classes: 47 acute lymphoblastic leukemia (ALL) and 25 acute myeloid leukemia (AML).
	
	\item Lung \citep{gordon2002translation}: This dataset contains 181 samples, among which 31 samples are labeled with MPM and 150 labeled with ADCA. Each sample is described by 12533 genes.
	
	\item Lymphoma \citep{alizadeh2000distinct}: This dataset consists of 45 samples from Lymphoma patients described by 4026 genes and classified into two subtypes: 23 GCL, 22 ACL.
\end{itemize} 

Similar to Section \ref{sec: real data}, we first scaled each dataset. To estimate generalization performance, we performed 20 runs of 5-fold cross-validation. The representative algorithms with which we compared the SFP as well as the procedure of hyperparameter tuning are the same as in Section \ref{sec: real data}. We observed that in almost all cases, for high-dimensional datasets, $ k = M $ is the best choice, becoming selected in hyperparameter tuning. 

\replaced{P3.25}{Since all the used gene expression datasets are two-classes, and both sensitivity and specificity are concerned to diagnose cancer and have different consequences, to assess the effectiveness of the proposed method, we used several metrics defined as follows:
	\begin{flalign*}
		\qquad & \bullet \text{Accuracy} = \frac{\text{TP}+\text{TN}}{\text{TP} + \text{FP} + \text{FN} + \text{TN}} &\\
		\qquad & \bullet \text{Sensitivity} = \frac{\text{TP}}{\text{TP} + \text{FN}} &\\
		\qquad & \bullet \text{Specificity} = \frac{\text{TN}}{\text{TN} + \text{FP}} &\\
		\qquad & \bullet \text{AUC: Area under receiver operating characteristic (ROC) curve} &
	\end{flalign*}
	where TP, FP, FN, TN denote the number of true positives, false positives, false negatives, and true negatives, respectively.}{Since all the used gene expression datasets are two-classes, and both type I and type II errors are concerned to diagnose cancer and have different consequences, in addition to accuracy, we have used sensitivity, specificity, and AUC to assess the effectiveness of the proposed method.} Table \ref{tab: gene results comparisons} shows the accuracy and AUC on each dataset for comparing methods. The sensitivity versus specificity is also plotted in Figure \ref{fig: sensitivity vs specificity-gene}. The performance of SFP on Colon and Leukemia datasets was better than other classifiers in terms of \replacedmy{PA}{both accuracy and AUC}{all the four metrics}. However, SVM-linear and RF outperformed the others on Lung and Lymphoma datasets. Overall, \replacedmy{PA}{SFP}{the SFP} leads to competitive results in case of high-dimensional data. That is mainly due to the existence of entropy regularization terms for the weights and memberships, which enables the SFP algorithm to \replacedmy{PA}{make a trade-off between}{control} flexibility and complexity of the model by tuning the parameters $ \gamma $ and $ \lambda $, and to select significant features in a way that the model \replacedmy{PA}{becomes less prone to}{suffers less from} the curse of dimensionality. 

In terms of running time, as we can see in Figure \ref{fig: run-time-gene}, \replacedmy{PA}{SFP}{the SFP} is by far the fastest algorithm on all datasets, except the Colon, while RF is  the most time-consuming in all cases. Although KNN has the minimum running time on the Colon dataset, its computational cost, in contrast to that of SFP, increases more considerably than those of other methods as the number of features or samples increases across the datasets. We have implemented \replacedmy{PA}{the SFP algorihtm}{the SFP} in R language, which is a lot slower than C++, in which other methods are implemented; however, SFP is the clear overall winner and has significant speed advantages, especially in high-dimensional settings. This is also consistent with theoretical results for time complexity; for example, one can conclude that SFP with the cost of roughly $ \mathcal{O}(nkp) $ (see Section \ref{sec: BCD}) is more efficient than SVM with the cost of $ \mathcal{O}(\max(n,p) \min(n,p)^2) $ \citep{chapelle2007training} in the case $ k \leq \min(n,p) $, which is very likely to hold in high-dimensional data. 

\begin{table}[!ht] 
	\centering
	\caption{The 5-fold cross validation accuracy and AUC of different algorithms on the gene expression datasets.}
	\def\arraystretch{.5}
	\begin{tabu}{X[1,l]  X[1,c] X[1.2,c] X[1.1,c]  X[1,c] X[1,c] X[1,c]}
		\tabucline[1pt]-
		& \multicolumn{6}{c}{accuracy (\%)}  \\
		\tabucline[1pt]-
		dataset	& KNN & SVM-linear & SVM-RBF & RF & ERT & SFP  \\ 
		\tabucline[1pt]-
		
		Colon & 70.2 & 79.8 & 77.5 & 76.6 & 80.6 & 81.1 \\
		
		Leukemia & 84.6 & 97.0 & 92.4 & 95.0 & 94.7 & 97.4 \\
		
		Lung & 93.4 & 99.1 & 97.9 & 99.1 & 98.8 & 97.3 \\
		
		Lymphoma & 68.0 & 92.7 & 84.3 & 94.2 & 90.1 & 87.7 \\
		
		\tabucline[1pt]-			
		& \multicolumn{6}{c}{AUC} \\	
		\tabucline[1pt]-      
		
		Colon & 0.645 & 0.768 & 0.726 & 0.719 & 0.774 & 0.785 \\
		
		Leukemia & 0.811 & 0.961 & 0.897 & 0.929 & 0.924 & 0.977 \\
		
		Lung & 0.809 & 0.974 & 0.953 & 0.974 & 0.967 & 0.935 \\
		
		Lymphoma & 0.683 & 0.926 & 0.844 & 0.942 & 0.901 & 0.876 \\
		
		\tabucline[1pt]-
	\end{tabu} 
	\label{tab: gene results comparisons}
\end{table}

\begin{figure}[!t]
	\centering
	\begin{subfigure}{.52\textwidth}
		\centering\
		\includegraphics[width=\linewidth]{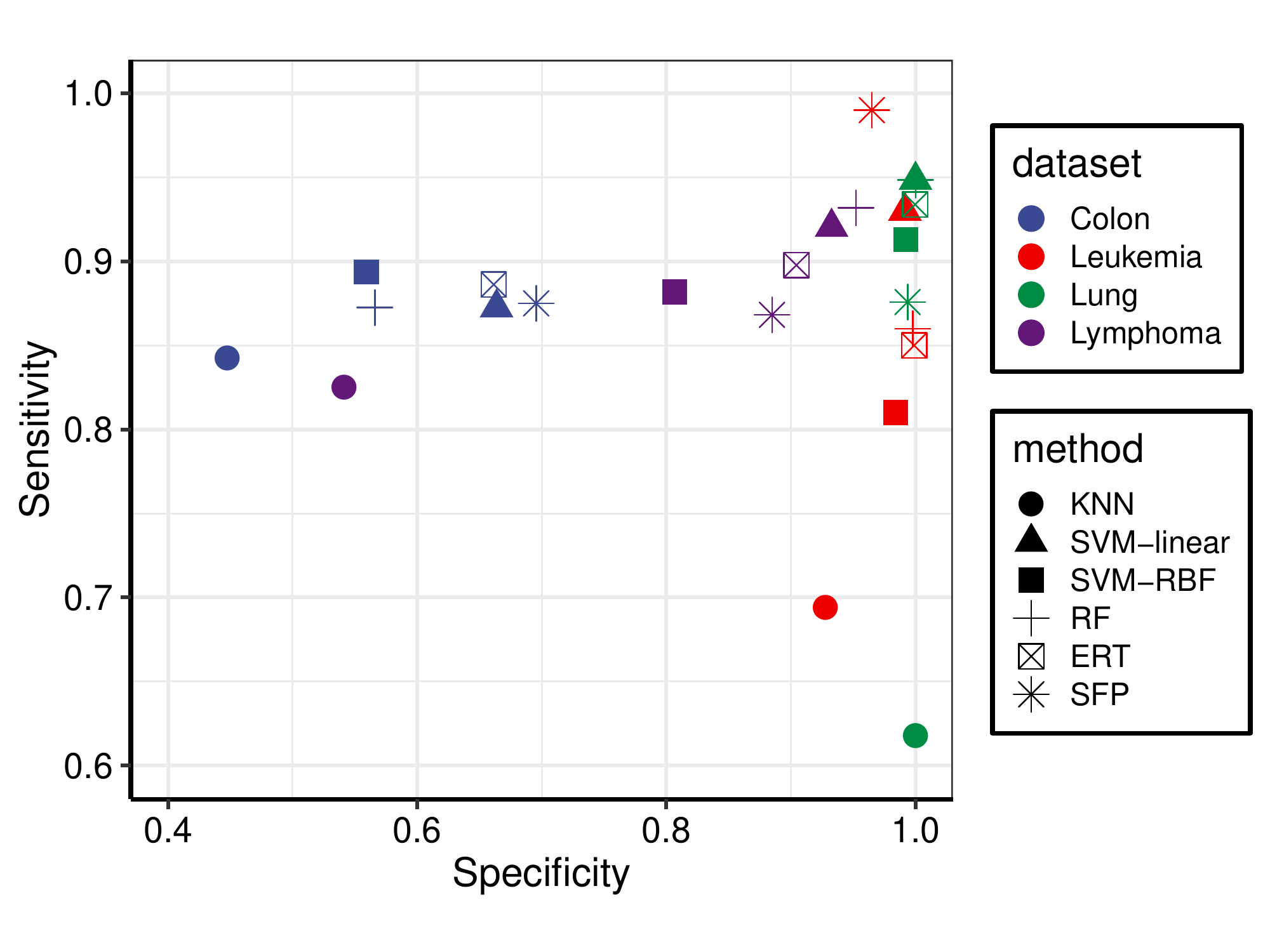}
		\caption{}
		\label{fig: sensitivity vs specificity-gene}
	\end{subfigure}
	\begin{subfigure}{.46\textwidth}
		\centering\
		\includegraphics[width=\linewidth]{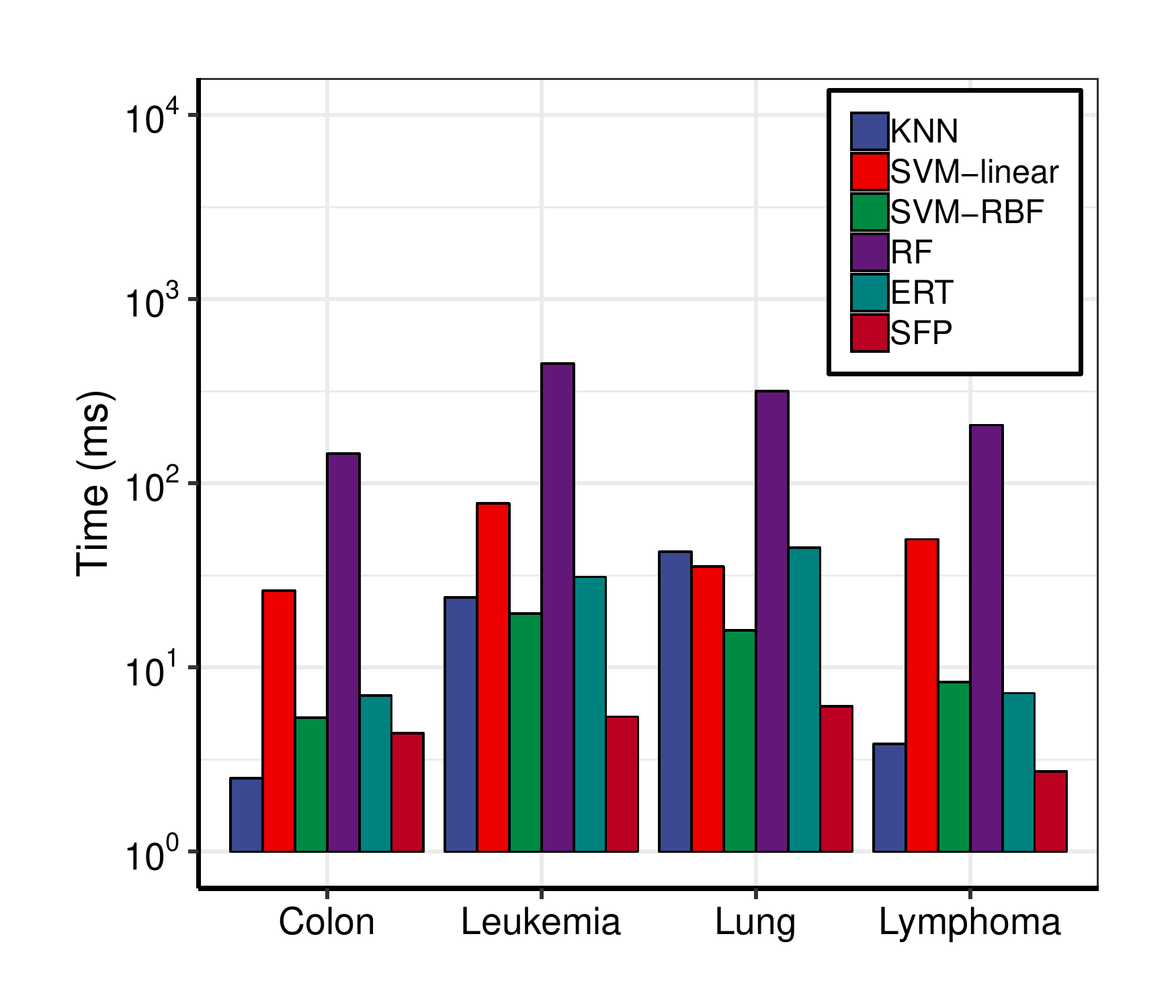}
		\caption{}
		\label{fig: run-time-gene}
	\end{subfigure}
	\caption{Comparison of different algorithms on different gene expression data. The left panel shows the plot of sensitivity versus specificity across the 4 gene expression datasets and the 5 methods. The right panel compares the methods in terms of running time (including both training and test time).}
	\label{fig: performance-runtime-gene}
\end{figure}

In high-dimensional settings, we expect that $ \lambda $ would have a critical role in achieving good performance. To see this, we first describe the procedure of feature selection based on \replacedmy{PA}{SFP}{the SFP}, then investigate the impact of $ \lambda $ on performance and resulting weights. Let $ w_{jI_j(1)} \geq \dots \geq w_{jI_j(p)} $ be the feature weights of $ j $th cluster in descending order, where $ I_j(l) $ is the index of the $ l $th largest weight of cluster $ j $. In each cluster, we want to select the minimum number of features whose weights sum to over $ 0.9 $. More precisely, in $ j $th cluster, the set of features $ S_j=\{I_j(1), \dots, I_j(r_j)\} $ are selected, where $ r_j $ is chosen such that it holds $ \sum_{l=1}^{r_j-1} w_{jI_j(l)} < 0.9 \leq \sum_{l=1}^{r_j} w_{jI_j(l)} $. Hence, the set of all significant features is $ S = \bigcup_{j=1}^{k} S_j $. Figures \ref{fig: significant-features-lambda-colon} and \ref{fig: significant-features-lambda-leukemia} give a concrete example; in which, considering $ k = 2 $, $ \gamma' = 0.6 $, and $ \alpha' = 0.3 $, the percentage of significant features ($ 100*|S|/p $) versus $ \lambda' $ on the Colon and the Leukemia datasets are plotted. We also illustrate AUC versus $ \lambda' $ in Figures \ref{fig: AUC-lambda-colon.pdf} and \ref{fig: AUC-lambda-leukemia}. For the Colon dataset, about $ 20 \% $ of features are significant at the optimal values of $ \lambda' $, which is around 0.3, while this number for the Leukemia dataset is under $ 1 \% $, which occurs at $ \lambda' \approx 0.9 $. We see that the Leukemia dataset requires greater values of $ \lambda' $, which is probably due to having over three times as many features as Colon does.  

\begin{figure}[!t]
	\centering
	\begin{subfigure}{.49\textwidth}
		\centering
		\includegraphics[width=\linewidth]{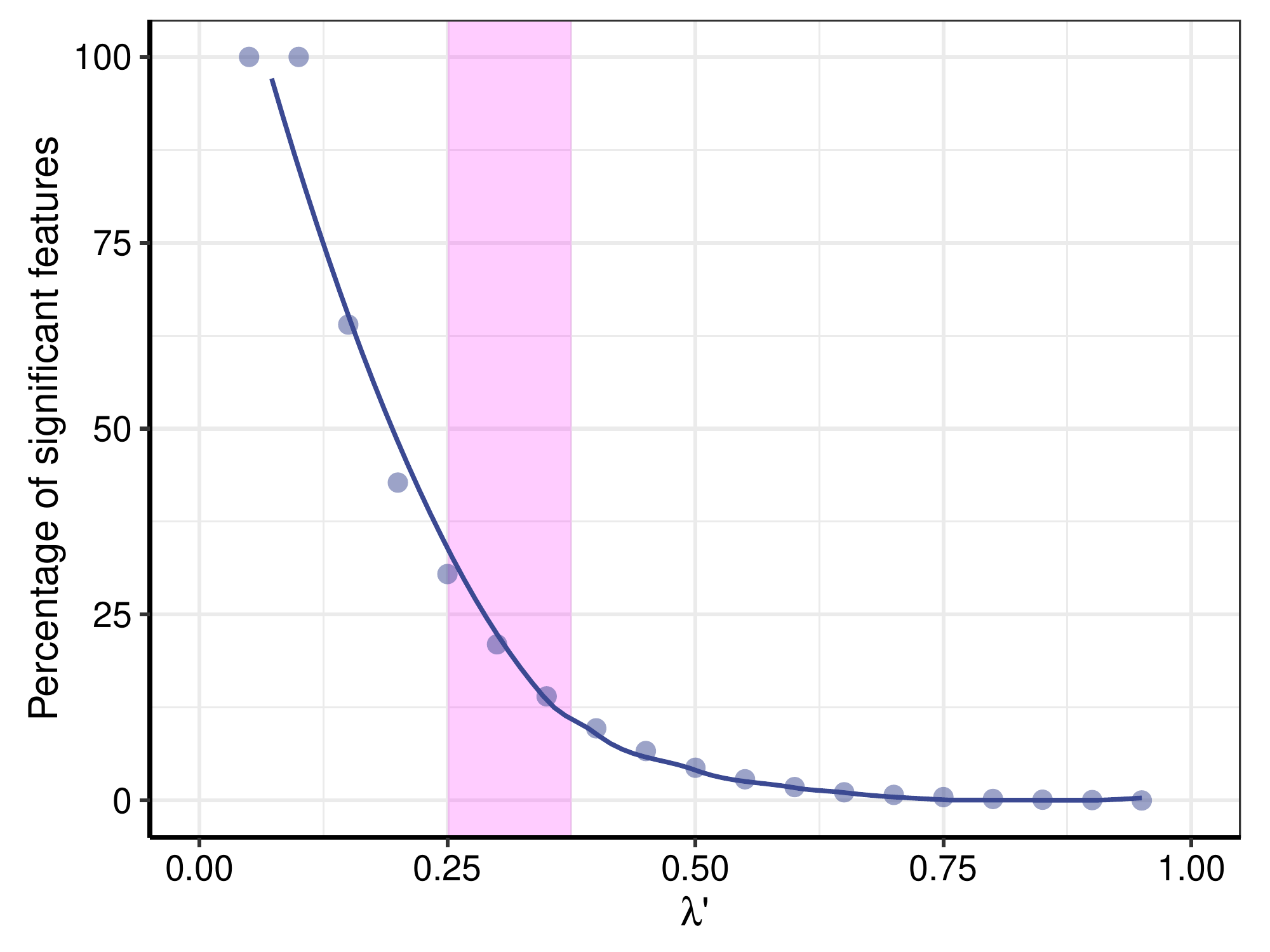}
		\caption{Colon}
		\label{fig: significant-features-lambda-colon}
	\end{subfigure}
	\begin{subfigure}{.49\textwidth}
		\centering
		\includegraphics[width=\linewidth]{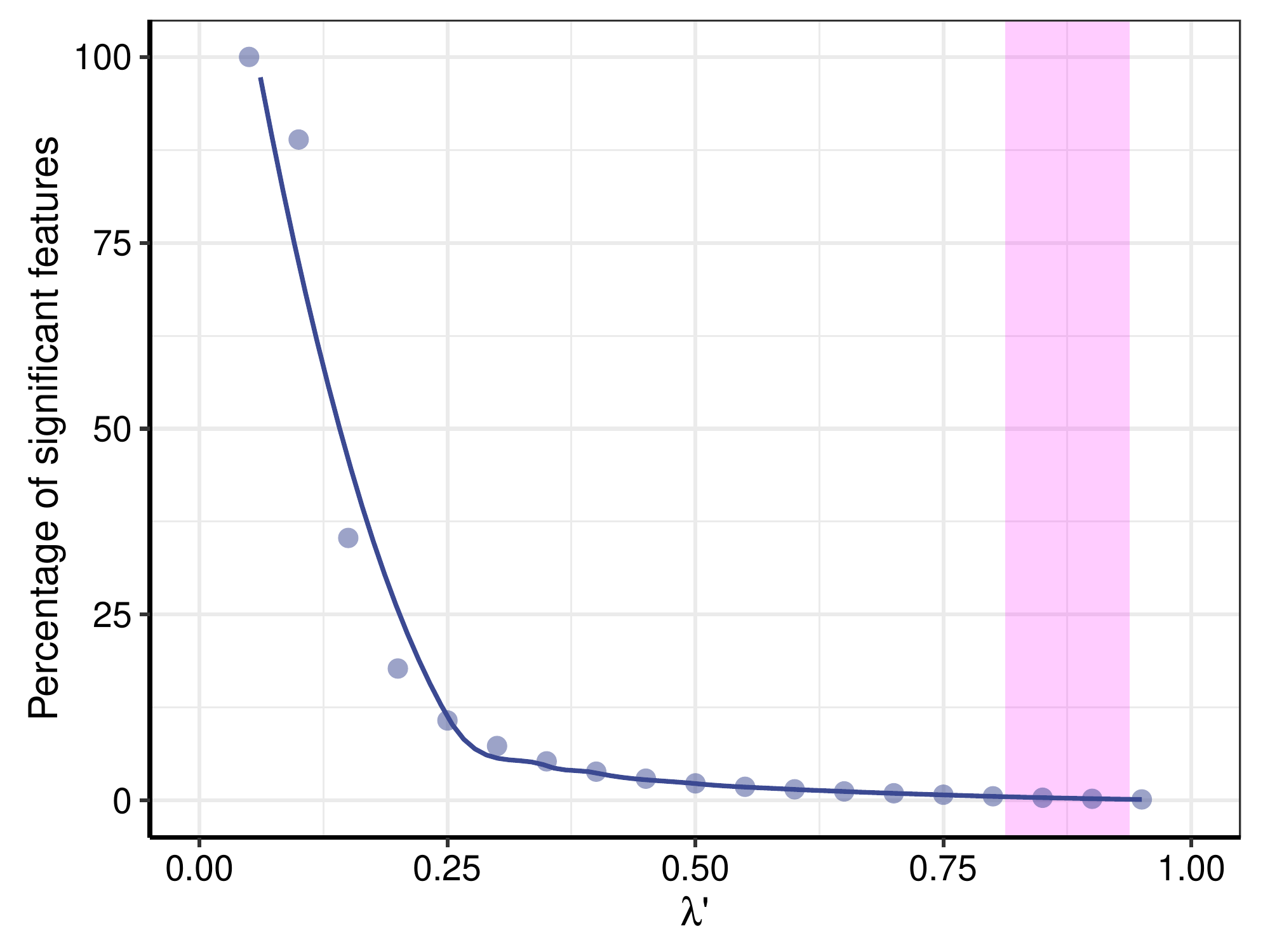}
		\caption{Leukemia}
		\label{fig: significant-features-lambda-leukemia}
	\end{subfigure}
	\begin{subfigure}{.49\textwidth}
		\centering
		\includegraphics[width=\linewidth]{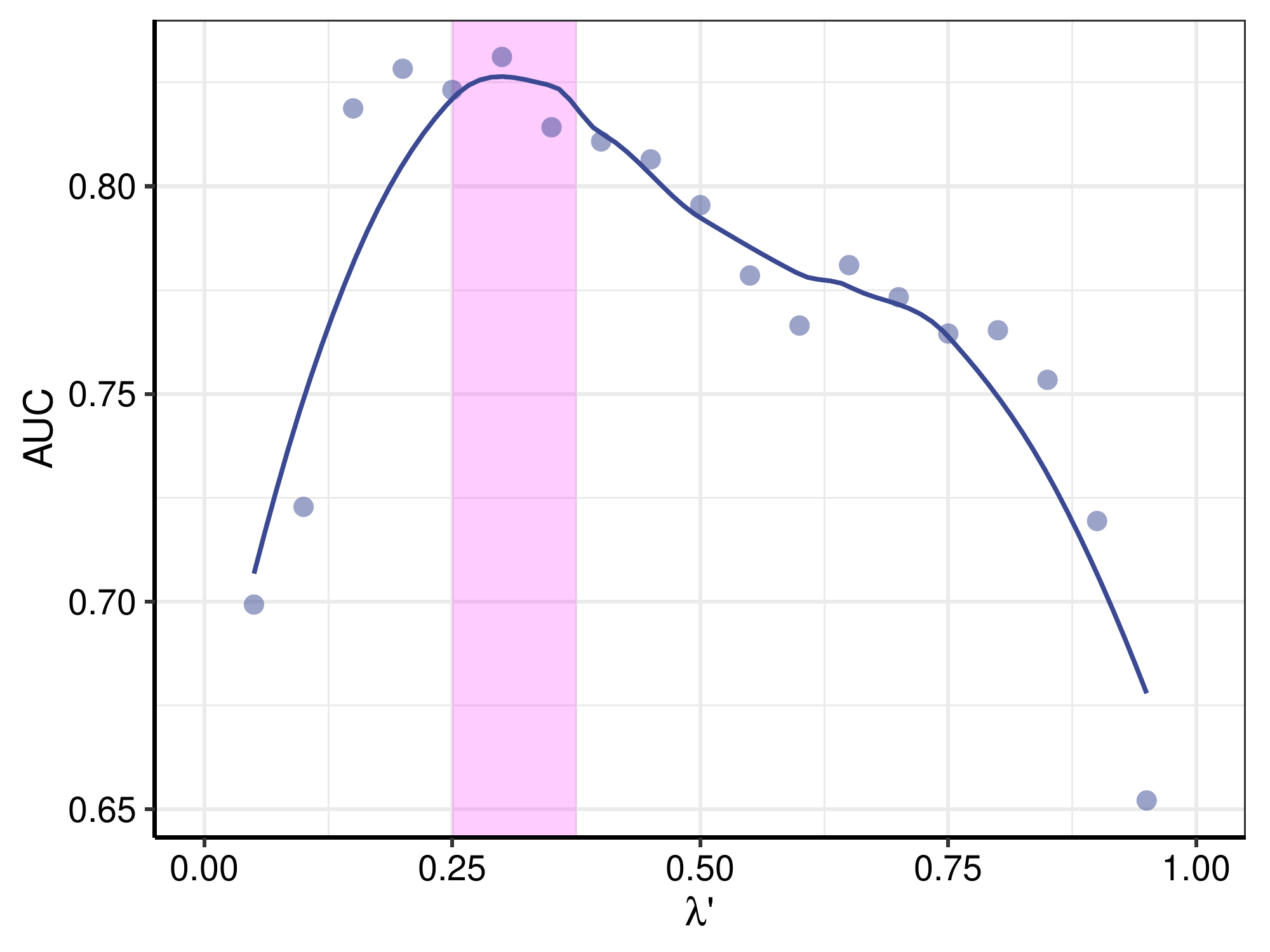}
		\caption{Colon}
		\label{fig: AUC-lambda-colon.pdf}
	\end{subfigure}
	\begin{subfigure}{.49\textwidth}
		\centering
		\includegraphics[width=\linewidth]{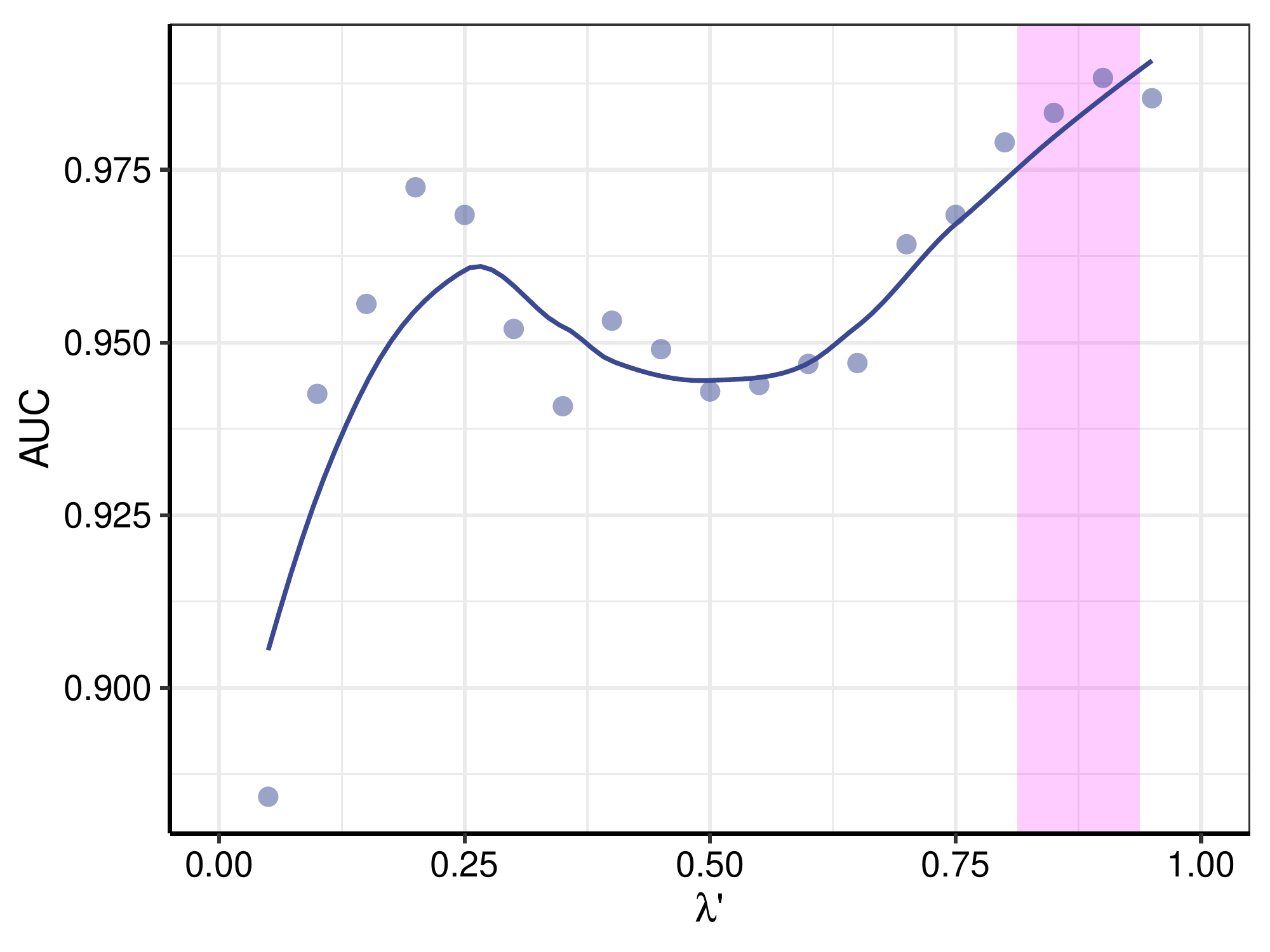}
		\caption{Leukemia}
		\label{fig: AUC-lambda-leukemia}
	\end{subfigure}
	\caption{Illustration of the role of $ \lambda $ in high-dimensional settings. The top row shows the role of $ \lambda $ in the selection of significant features, and the bottom row shows plots of AUC (estimated by 20 runs of 5-folds) versus $ \lambda' $ on the colon (left column) and leukemia (right column) datasets. Results are produced by setting $ k = 2 $, $ \gamma' = 0.6 $, and $ \alpha' = 0.3 $, and the smooth curves are obtained via LOESS. An interval of suitable values for $ \lambda' $ in terms of AUC is annotated in each plot.}
	\label{fig: SFP lambda effect}
\end{figure}

\section{Conclusions \added{EiC-CiC}{and Future Work}} \label{sec: conclusions}
Centroid-based algorithms such as c-means (FCM) provide a flexible, simple, and computationally efficient approach to data clustering. We have extended such methods to be applicable to a wider range of machine learning tasks from classification to regression to feature selection. Specifically, the proposed method, called supervised fuzzy partitioning (SFP), involves labels and the loss function in a k-means-like objective function by introducing a surrogate term as a penalty on the empirical risk. We investigated that the adopted regularization guarantees a valid penalty on the risk \replacedmy{PA}{in cases where the loss function used is convex.}{in case of convex loss function.} The objective function was also changed such that the resulting partition could become fuzzy, which in turn made the model more complex and flexible. \replacedmy{PA}{In this regard,}{To achieve this,} a penalty on the nonnegative entropy of memberships alongside a hyperparameter to control the strength of fuzziness were added to the objective function. To measure the importance of features and achieve more accurate results in high-dimensional settings, \deletedmy{PA}{the} weights were assigned to features and included in the metric used in the within-cluster variation. Similar to fuzzification, an entropy-based regularizer was also added in order \replacedmy{PA}{to regulate the dispersion of weights.}{to limit the diversity of weights.} An iterative scheme based on block coordinate descent (BCD) was presented, converging fast to a local optimum for SFP. Neat solutions were provided for almost all blocks, leading to efficient, explicit update equations. It was shown that the computational complexity of this algorithm is linear with respect to both size and dimension. The relationships of SFP with RBF networks and mixtures of experts were investigated. \addedmy{EiC2}{We discussed that the BCD procedure for SFP is almost the same as the EM algorithm for generative mixtures of experts.} SFP classifiers have a number of advantages over other centroid-based methods that use supervision: (1) in contrast to constraint-based methods, they leverage all labeled data without a high computational cost; (2) they can use almost any type of convex loss function for employing supervision; (3) entropy-based regularizations for memberships and weights make them very flexible models able to adapt to various settings. We finally evaluated the classification performance of SFP on synthetic and real data, achieving the results competitive with random forest and SVM in terms of predictive performance and running time. \replacedmy{PA}{SFP}{The SFP}, in contrast to random forest, typically results in smoother decision boundaries, yielding more stable and natural models that can also benefit from any arbitrary convex loss function profitably and, unlike SVM, it is not specific to hinge loss, for example.

\added{CiC}{In experiments carried out in this paper, we used grid search and k-fold cross-validation to tune the hyperparameters of SFP. However, this requires a large number of SFP runs, becoming computationally intractable in the case of a large dataset. Further studies are needed to determine the role of the hyperparameters and to find some plug-in selectors or rules of thumb for estimating optimal values or at least reducing the search space. Moreover, even a single run of SFP is inapplicable to a massive dataset with a million of observations since large memory is required to store the membership matrix, which would be a barrier especially if the number of centroids is also huge. Although some improvement can be achieved in this regard by somehow restricting the membership matrix to sparse ones or employing a hard partitioning, this might cause the predictive performance of the algorithm or its flexibility to dramatically deteriorate. Without sacrificing much accuracy, we will seek to develop an efficient incremental or online version of SFP in the future because it not only mitigates the mentioned problem to a great extent but also makes the algorithm applicable to online learning settings such as stream classification. Another line for future work can be to derive a kernelized SFP and compare its performance with the original one. It seems straightforward because SFP is a variant of k-means, and techniques similar to those utilized in kernel k-means can be applied to SFP as well.}

\appendix

\section{Proof of Theorem \ref{theorem: max-entropy}} \label{app: proof-max-entropy-theorem}
We drop the constraints $ \bm{\theta} \geq 0 $, minimizing the objective function on a larger space. We will see that this condition will automatically hold. Now considering the Lagrangian $ L(\bm{\theta}, \lambda) = \sum_{i=1}^{m} a_i \theta_i + \gamma \sum_{i=1}^{m} \theta_i \ln(\theta_i) + \lambda (\sum_{i=1}^{m} \theta_i - 1) $, the first-order necessary conditions (KKT) become
\begin{subequations}
	\label{eq: KKT}
	\begin{align}
		& \text{I) Stationarity: } \nabla_{\bm{\theta}} L = 0 \ \Longrightarrow\ &\frac{\partial L}{\partial \theta_i} = a_i + \gamma(\ln(\theta_i)+1) + \lambda = 0, \quad i=1, \dots, m \label{eq: KKT-stationarity}\\
		& \text{II) Feasibility: } \sum_{i=1}^{m} \theta_i = 1, \label{eq: KKT-feasibility}
	\end{align}
\end{subequations}
These equations can be solved for the unknowns $ \bm{\theta}, \lambda $. From equations \eqref{eq: KKT-stationarity} and \eqref{eq: KKT-feasibility}, we obtain
\begin{align}
	& \lambda^* = \gamma \ln\left(\sum_{i'=1}^{m} \exp(-\frac{a_{i'}}{\gamma})\right) - \gamma,\\
	& \theta_i^* = \frac{\exp(-\frac{a_i}{\gamma})}{\sum_{i'=1}^{m} \exp(-\frac{a_{i'}}{\gamma})}, \quad i=1, \dots,m ,
\end{align} 
Now, we check the second-order conditions for the problem. The Hessian of the Lagrangian at this point becomes
\begin{equation}
	\nabla^2 L(\bm{\theta}^*, \lambda^*) = \text{diag}\left(\frac{\gamma}{\theta_1^*}, \dots, \frac{\gamma}{\theta_m^*}\right).
\end{equation}
This matrix is positive definite, so it certainly satisfies the second-order \replacedmy{PA}{sufficient }{sufficiency} conditions\deletedmy{EiC}{luenberger1984linear}, making $ \bm{\theta}^* $ a strict local minimum. We can easily investigate that this problem is convex due to the convexity of the objective function and the feasible region, concluding that $ \bm{\theta}^* $ is also a global solution to the problem, which completes the proof of the theorem. $ \hfill \blacksquare $

\section{Proof of Theorem \ref{theorem: EM-vs-BCD}} \label{app: proof-EM-vs-BCD-theorem}
Assume that $ \boldsymbol{\psi}^t $ are estimates of the parameters at iteration $ t $ in the EM algorithm derived by \eqref{eq: GME-M-step}, and  $ \mathbf{U}^t = [u_{ij}^t] $ is defined by \eqref{eq: GME-E-step-posterior}; then it suffices to prove the following propositions:
\begin{enumerate}[label=(\roman*)]
	\item $ \mathbf{U}^t \in \underset{\mathbf{U}}{\argmin}\ J(\mathbf{U}, \boldsymbol{\psi}^t)  $
	\item $ \boldsymbol{\psi}^{t+1} \in \underset{\boldsymbol{\psi}}{\argmin}\ J(\mathbf{U}^t, \boldsymbol{\psi}) $
\end{enumerate}
We can show (i) simply by employing Theorem \ref{theorem: max-entropy}. If $ \mathbf{U}^{t*} \in \underset{\mathbf{U}}{\argmin}\ J(\mathbf{U}, \boldsymbol{\psi}^t) $, from Theorem \ref{theorem: max-entropy}, we have
\begin{equation*}
	u_{ij}^{t*} = \frac{\exp(- D(\mathbf{x}_i; \boldsymbol{\nu}_j^t) - \ell(y_i, \mathbf{x}_i; \boldsymbol{\theta}_j^t))}{\sum_{j'=1}^{m} \exp(- D(\mathbf{x}_i; \boldsymbol{\nu}_{j'}^t) - \ell(y_i, \mathbf{x}_i; \boldsymbol{\theta}_{j'}^t)) }
\end{equation*}
By substituting $  D(\mathbf{x}_i; \boldsymbol{\nu}_j) = -\ln \pi_j - \ln q(\mathbf{x}_i;\boldsymbol{\eta}_j) $ and $ \ell(y_i, \mathbf{x}_i; \boldsymbol{\theta}_j) = -\ln f(y_i|\mathbf{x}_i; \boldsymbol{\theta}_j) $  in the above equation and simplifying, we conclude that $ u_{ij}^{t*} = u_{ij}^t $, so (i) is proved. To show (ii), we note that 
\begin{equation*}
	Q(\boldsymbol{\psi}, \boldsymbol{\psi}^t) = -J(\mathbf{U}^t, \boldsymbol{\psi}) + \sum_{i=1}^{n} \sum_{j=1}^{k} u_{ij}^t \ln(u_{ij}^t), 
\end{equation*} 
and hence the problem $ \underset{\boldsymbol{\psi}}{\argmax}\ Q(\boldsymbol{\psi}, \boldsymbol{\psi}^t) $ is equivalent to $ \underset{\boldsymbol{\psi}}{\argmin}\ J(\mathbf{U}^t, \boldsymbol{\psi}) $, which completes the proof of the theorem. $ \hfill \blacksquare $

\medskip

\bibliographystyle{plainnat}
\bibpunct{(}{)}{;}{a}{,}{,}

\bibliography{ref}

\end{document}